\definecolor{darkgreen}{rgb}{0,0.5,0}
\theoremstyle{plain}
\newtheorem{theorem}{Theorem}[section]
\newtheorem{lemma}[theorem]{Lemma}
\theoremstyle{definition}
\theoremstyle{remark}
\theoremstyle{plain}
\newtheorem{goal}[theorem]{Goal}
\newcommand{\cA}{\mathcal{A}}
\newcommand{\cG}{\mathcal{G}}
\newcommand{\cO}{\mathcal{O}}
\newcommand{\eps}{\varepsilon}
\newcommand{\E}{\mathbb{E}}
\title{Online bipartite matching with imperfect advice}
\author[1]{Davin Choo\thanks{Equal contribution}}
\author[2]{Themis Gouleakis$^*$}
\author[3]{Chun Kai Ling$^*$}
\author[4]{Arnab Bhattacharyya}
\affil[1,2,4]{School of Computing, National University of Singapore}
\affil[3]{Industrial Engineering and Operations Research, Columbia University}
\date{}
\begin{document}

\maketitle

\begin{abstract}
We study the problem of online unweighted bipartite matching with $n$ offline vertices and $n$ online vertices where one wishes to be competitive against the optimal offline algorithm.
While the classic \textsc{Ranking} algorithm of \cite{karp1990optimal} provably attains competitive ratio of $1-1/e > 1/2$, we show that no learning-augmented method can be both 1-consistent and strictly better than $1/2$-robust under the adversarial arrival model.
Meanwhile, under the random arrival model, we show how one can utilize methods from distribution testing to design an algorithm that takes in external advice about the online vertices and provably achieves competitive ratio interpolating between any ratio attainable by advice-free methods and the optimal ratio of 1, depending on the advice quality.
\end{abstract}

\section{Introduction}
\label{sec:intro}

Finding matchings in bipartite graphs is a mainstay of algorithms research. The area's mathematical richness is complemented by a vast array of applications --- any two-sided market (e.g., kidney exchange, ridesharing) yields a matching problem. 
In particular, the \emph{online} variant enjoys much attention due to its application in internet advertising.
Consider a website with a number of pages and ad slots (videos, images, etc.). Advertisers specify ahead of time the pages and slots they like their ads to appear in, as well as the target user. The website is paid based on the number of ads appropriately fulfilled.
Crucially, ads slots are available only when traffic occurs on the website and are not known in advance. Thus, the website is faced with the \emph{online} decision problem of matching advertisements to open ad slots.

The classic online unweighted bipartite matching problem by \citet{karp1990optimal} features $n$ offline vertices $U$ and $n$ online vertices $V$.
Each $v \in V$ reveals its incident edges sequentially upon arrival.
With each arrival, one makes an irrevocable decision whether (and how) to match $v$ with a neighboring vertex in $U$.
The final offline graph $\cG = (U \cup V, E)$ is assumed to have a largest possible matching of size $n^* \leq n$, and we seek online algorithms producing matchings of size as close to $n^*$ as possible.
The performance of a (randomized) algorithm $\cA$ is measured by its \emph{competitive ratio}:
\begin{equation}
\label{eq:competitive-ratio-adversarial}
\min_{\cG= (U \cup V, E)} \min_{\substack{V\text{'s arrival seq.}}} \frac{\E[\text{\# matches by $\cA$}]}{n^*} \;,
\end{equation}
where the randomness is over any random decisions made by $\cA$.
Traditionally, one assumes the \emph{adversarial arrival model}, i.e., an adversary controls both the final graph $\cG$ and the arrival sequence of online vertices.

Since any maximal matching has size at least $n^*/2$, a greedy algorithm trivially attains a competitive ratio of $1/2$. Indeed, \citet{karp1990optimal} show that no deterministic algorithm can guarantee better than $1/2 - o(1)$.
Meanwhile, the randomized \textsc{ranking} algorithm of \citet{karp1990optimal} attains an asymptotic competitive ratio of $1-1/e$ which is also known to be optimal \citep{karp1990optimal,goel2008online,birnbaum2008line,vazirani2022online}.

In practice, \emph{advice} (also called predictions or side information) is often available for these online instances.
For example, online advertisers often aggregate past traffic data to estimate the \emph{future} traffic and corresponding user demographic.
While such advice may be imperfect, it may nonetheless be useful in increasing revenue and improving upon aforementioned worst-case guarantees.
Designing algorithms that utilize such advice in a principled manner falls under the research paradigm of \emph{learning-augmented algorithms}.
A learning-augmented algorithm is said to be (i) $a$-\emph{consistent} if it is $a$-\emph{competitive} with perfect advice and (ii) $b$-\emph{robust} if it is $b$-\emph{competitive} with arbitrary advice quality.

\begin{goal}
\label{goal:test}
Let $\beta$ be the best-known competitive ratio attainable by any classical advice-free online algorithm.
Can we design a learning-augmented algorithm for the online bipartite matching problem that is 1-consistent and $\beta$-robust?
\end{goal}

Clearly, \cref{goal:test} depends on the form of advice as well as a suitable measure of its quality. Setting these technicalities aside for now, we remark that \cref{goal:test} strikes the best of all worlds: it requires that a perfect matching be obtained when the advice is perfect, while not sacrificing performance with respect to advice-free algorithms when faced with low-quality advice. In other words, there is potential to benefit, but no possible harm when employing such an algorithm. We make the following contributions in pursuit of \cref{goal:test}.

\paragraph{1. Impossibility under adversarial arrivals}\hspace{0pt}\\
We show that under adversarial arrivals, learning augmented algorithms, no matter what form the advice takes, cannot be both 1-consistent and strictly more than 1/2-robust. The latter is worse than the competitive ratio of $1-1/e$ guaranteed by known advice-free algorithms \citep{karp1990optimal}.

\paragraph{2. Achieving \cref{goal:test} under the random arrival model}\hspace{0pt}\\
We propose an algorithm \textsc{TestAndMatch} achieving \cref{goal:test} under the weaker \emph{random arrival model}, in which an adversary controls the online vertices $V$ but its arrival order is randomized.
Our advice is a \emph{histogram over types} of online vertices; in the context of online advertising this corresponds to a forecast of the user demographic and which ads they can be matched to.
\textsc{TestAndMatch} assumes perfect advice while simultaneously testing for its accuracy via the initial arrivals.
If the advice is deemed useful, we mimic the matching suggested by it; else, we revert to an advice-free method.
The testing phase is kept short (sublinear in $n$) by utilizing state-of-the-art $L_1$ estimators from distribution testing.
We analyze our algorithm's performance as a function of the quality of advice, showing that its competitive ratio gracefully degrades to $\beta$ as quality of advice decays.
To the best of our knowledge, our work is the first that shows how one can leverage techniques from the property testing literature to designing learning-augmented algorithms.

\hspace{0pt}\\
While our contributions are mostly theoretical, we give and discuss various practical extensions of \textsc{TestAndMatch}, and also show preliminary experiments in \cref{sec:appendix-poc}.

\section{Preliminaries and related work}
\label{sec:prelims}

An online bipartite instance is defined by a bipartite graph $\cG = (U \cup V, E)$ where $U$ and $V$ are the set of $n$ offline and $n$ online vertices respectively.
A \emph{type} of an online vertex $v \in V$ refers to the subset of offline vertices $\{ u \in U \mid \{u,v\} \in E \}$ that it is neighbors with; there are $2^n$ possible types and at most $n$ of them are realized through $V$.
The types of vertices $v_i \in V$ are revealed one at a time in an online fashion, when the corresponding vertex arrives and one has to decide whether (and how) to match the newly arrived vertex irrevocably.
A \emph{matching} in the graph $\cG$ is a set of edges $M \subseteq E$ such that for every vertex $w \in U \cup V$, there is at most one edge in $M$ incident to $w$. Given two vectors of length $k$, we denote the $L_1$-distance between them as $L_1(x,y)=||x-y||_1=\sum_{i=1}^k |x_i-y_i|$. For any set $S$, $2^S$ denotes its power set (set of all subsets of $S$).

In this work, we focus on the classic unweighted online bipartite matching (see \citet{mehta2013online} for other variants) where the final offline graph has a matching of size $n^* \leq n$.

\paragraph{Arrival models.}
The degree of control an adversary has over $V$ affects analysis and algorithms.
The \emph{adversarial arrival model} is the most challenging, with both the final graph $\cG$ and the order in which online vertices arrive chosen by the adversary. Here, an algorithm's competitive ratio is given by \eqref{eq:competitive-ratio-adversarial}.
In \emph{random arrival models}, $\cG$ remains adversarial but the arrival order is random.
For this paper, we assume the \emph{Random Order} setting, where an adversary chooses a $\cG$, but the arrival order of $V$ is a uniformly random permutation.
In this setting, the competitive ratio is defined as
\begin{align}
\label{eq:competitive-ratio-random-arrival}
\min_{\cG= (U \cup V, E)} \E_{\substack{V\text{'s arrival seq.}}} \frac{\E[\text{\# matches by $\cA$}]}{n^*} \;.
\end{align}
Two even easier random arrival models exist: (i) \emph{known-IID model} \citep{feldman2009online}, where the adversary chooses a distribution over types (which is known to us), and the arrivals of $V$ are chosen by sampling i.i.d. from this distribution, and (ii) \emph{unknown-IID model}, which is the same as known-IID but with the distributions are not revealed to us.
The competitive ratios between these arrival models are known to exhibit a hierarchy of difficulty \citep{mehta2013online}:
\begin{align*}
    \text{Adversarial}
    \leq \text{Rand. Order}
    \leq \text{Unknown-IID}
    \leq \text{Known-IID}
\end{align*}
As our Random Order setting is the most challenging amongst these random arrival models, our methods also apply to the unknown-IID and known-IID settings.

\subsection{Advice-free online bipartite matching}

\cref{tab:competitive-ratios} summarizes known results about attainable competitive ratios and impossibility results in the adversarial and Random Order arrival models.
In particular, observe that there is a gap between the upper and lower bounds in the Random Order arrival model which remains unresolved.

\begin{table}[h]
\centering
\begin{tabular}{ccc}
\toprule
& Adversarial & Random Order\\
\midrule
det.\ algo.\ & $1/2$ & $1 - 1/e$\\
det.\ hardness & $1/2$ & $3/4$\\
rand.\ algo.\ & $1 - 1/e$ & $0.696$\\
rand.\ hardness & $1 - 1/e + o(1)$ & $0.823$\\
\bottomrule
\end{tabular}
\caption{Known competitive ratios for the classic unweighted online bipartite matching problem for deterministic (det.) and randomized (rand.) algorithms under the adversarial and Random Order arrival models.
Note that $1 - 1/e \approx 0.63$.}
\label{tab:competitive-ratios}
\end{table}

On the positive side of things, the deterministic \textsc{Greedy} algorithm which matches newly arrived vertex with any unmatched offline neighbor attains a competitive ratio of at least $1/2$ in the adversarial arrival model and at least $1 - 1/e$ in the random arrival model \citep{goel2008online}.
Meanwhile, the randomized \textsc{Ranking} algorithm of \citet{karp1990optimal} achieves a competitive ratio of $1 - 1/e$ in the adversarial arrival model.
In the Random Order arrival model, \textsc{Ranking} achieves a strictly larger competitive ratio, shown to be at least 0.653 in \citet{karande2011online} and 0.696 in \citet{mahdian2011online}.
However, \citet{karande2011online} showed that \textsc{Ranking} cannot beat 0.727 in general; so, new ideas will be required if one believes that the tight competitive ratio bound is 0.823 \citep{manshadi2012online}.

On the negative side, the following example highlights the key difficulty faced by online algorithms.
Consider the gadget for $n=2$ in \cref{fig:gadget}, where the first online vertex $v_1$ neighbors with both $u_1$ and $u_2$ and the second online vertex $v_2$ neighbors with only one of $u_1$ or $u_2$.
Even when promised that the true graph is either $\cG_1$ or $\cG_2$, any online algorithm needs to correctly guess whether to match $v_1$ with $u_1$ or $u_2$ to achieve perfect matching when $v_2$ arrives.

\begin{figure}[h]
\centering
\resizebox{0.5\linewidth}{!}{
    \begin{tikzpicture}[square/.style={regular polygon,regular polygon sides=4}]

\node[] at (-1.5,-0.5) {$\cG_1$};

\node[draw, thick, square, minimum size=10pt, inner sep=1pt] at (0,0) (u1) {};
\node[left=5pt of u1] {$u_1$};
\node[draw, thick, square, minimum size=10pt, inner sep=1pt] at (0,-1) (u2) {};
\node[left=5pt of u2] {$u_2$};

\node[draw, thick, circle, minimum size=10pt, inner sep=1pt] at (1,0) (v1) {};
\node[right=5pt of v1] {$v_1$};
\node[draw, thick, circle, minimum size=10pt, inner sep=1pt] at (1,-1) (v2) {};
\node[right=5pt of v2] {$v_2$};

\draw[thick] (v1) -- (u1);
\draw[thick] (v1) -- (u2);
\draw[thick, red] (v2) -- (u1);

\draw[thick, dashed] (-0.5,-0.5) -- (4.5,-0.5);

\node[] at (5.5,-0.5) {$\cG_2$};

\node[draw, thick, square, minimum size=10pt, inner sep=1pt] at (3,0) (cu1) {};
\node[left=5pt of cu1] {$u_1$};
\node[draw, thick, square, minimum size=10pt, inner sep=1pt] at (3,-1) (cu2) {};
\node[left=5pt of cu2] {$u_2$};

\node[draw, thick, circle, minimum size=10pt, inner sep=1pt] at (4,0) (cv1) {};
\node[right=5pt of cv1] {$v_1$};
\node[draw, thick, circle, minimum size=10pt, inner sep=1pt] at (4,-1) (cv2) {};
\node[right=5pt of cv2] {$v_2$};

\draw[thick] (cv1) -- (cu1);
\draw[thick] (cv1) -- (cu2);
\draw[thick, red] (cv2) -- (cu2);
\end{tikzpicture}
}
\caption{Gadget for $n=2$. Red edges observed when $v_2$ arrives.}
\label{fig:gadget}
\end{figure}
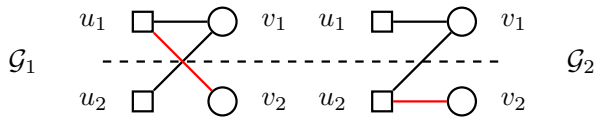

By repeating the gadget of \cref{fig:gadget} multiple times sequentially, any deterministic algorithm can only hope to attain competitive ratios of $1/2$ and $3/4$ in the adversarial and random arrival models respectively.
For randomized algorithms, \cite{karp1990optimal} showed that \textsc{Ranking} is essentially optimal for the adversarial arrival model since no algorithm can achieve a competitive ratio better than $1 - 1/e + o(1)$.
In the Random Order arrival model, \citet{goel2008online} showed (in their Appendix E) 
that a ratio better than $5/6 \approx 0.83$ cannot be attained by brute force analysis of a $3 \times 3$ gadget bipartite graph.
Subsequently, \citet{manshadi2012online} showed that no algorithm (deterministic or randomized) can achieve a competitive ratio better than $0.823$.

Technically speaking, the hardness result of \citet{manshadi2012online} is for the known IID \ model introduced by \citet{feldman2009online}, but this extends to the Random Order arrival model since the former is an easier setting; e.g.\ see Theorem 2.1 in \citep{mehta2013online} for an explanation.
Under the easier known IID model, the current state of the art algorithms achieve a competitive ratio of 0.7299 using linear programming \citep{jaillet2014online,brubach2016new,brubach2020online}.

\subsection{Learning-augmented algorithms for matching}
\label{sec:learning-augmented-related-work}


Learning-augmented algorithms as a whole have received significant attention since the seminal work of \citet{lykouris2021competitive}, where they investigated the online caching problem with predictions; their result was further improved by \citet{rohatgi2020near,antoniadis2020online,wei2020better}.
Algorithms with advice was also studied for the ski-rental problem \citep{gollapudi2019online,wang2020online,angelopoulos2020online}, non-clairvoyant scheduling \citep{purohit2018improving}, scheduling \citep{lattanzi2020online,bamas2020learning,antoniadis2022novel}, augmenting classical data structures with predictions (e.g.\ indexing \citep{kraska2018case} and Bloom filters \citep{mitzenmacher2018model}), online selection and matching problems \citep{antoniadis2020secretary, dutting2021secretaries}, online TSP \citep{bernardiniuniversal,gouleakis2023learning}, a more general framework of online primal-dual algorithms \citep{bamas2020primal}, and causal graph learning \citep{choo2023active}.

\citet{aamand2022optimal} studied the adversarial arrival models with offline vertex degrees as advice.
While their algorithm is optimal under the Chung-Lu-Vu random graph model \citep{chung2003spectra}, the class of offline degree advice is unable to attain 1-consistency.
\citet{feng2021two} propose a two-stage vertex-weighted variant, where advice is a proposed matching for the online vertices arriving in the first stage. \citet{jin2022online} showed in this setting a tight robustness-consistency tradeoff and derive a continuum of algorithms tracking this Pareto frontier. 
\citet{antoniadis2020secretary} studied settings with random vertex arrival and weighted edges.
Their advice is a prediction on edge weights adjacent to $V$ under an optimal offline matching.
Furthermore, their algorithm and analysis uses a hyper-pamareter quatifying confidence in the advice, leading to different consistency and robustness tradeoffs.
Another relevant work is the \textsc{LOMAR} method proposed by \citet{li2023learning}.
Using a pre-trained reinforcement learning (RL) model along with a switching mechanism based on regret to guarantee robustness with respect to any provided expert algorithm, they claim ``for some tuning parameter $\rho \in [0,1]$, \textsc{LOMAR} is $\rho$-competitive against our choice of expert online algorithm''.
We differ from \textsc{LOMAR} in two key ways:
\begin{enumerate}
    \item Our method does not require any pre-training phase and directly operate on the sequence of online vertices themselves.
    This means that whatever mistakes made during our ``testing'' phase contributes to our competitive ratio; a key technical contribution is the use of distribution testing to ensure that the number of such mistakes incurred is sublinear.
    \item The robustness guarantee of \citet{li2023learning} is substantially weaker than what we provide.
    Suppose the expert used by \textsc{LOMAR} is $\beta$-competitive, just like how we use the state-of-the-art algorithm as the baseline.
    Although \citet{li2023learning} does not analyze the consistency guarantee of their method, one can see that LOMAR is $(1 - \rho)$-consistent and $\rho \cdot \beta$-robust (ignoring the $B \geq 0$ hyperparameter).
    \textsc{LOMAR} can only be 1-consistent when $\rho = 0$, i.e.\ it blindly follows the RL-based method; but then it will have no robustness guarantees.
    In other words, \textsc{LOMAR} cannot simultaneously achieve 1-consistency and $\rho \cdot \beta$-robustness without knowing the RL quality.
    In contrast, our method is \emph{simultaneously} 1-consistent and $\approx \beta$-robust \emph{without} knowing the quality of our given advice; we evaluate its quality as vertices arrive.
\end{enumerate}
\cref{tab:literature-comparision} compares the consistency-robustness tradeoffs.

\begin{table}[h]
\centering
\begin{tabular}{cccc}
\toprule
& \citep{jin2022online} & \textsc{LOMAR} & Ours\\
\midrule
Robustness & $R$ & $\rho \cdot \beta$ & $\approx \beta$\\
Consistency & $1 - (1 - \sqrt{1-R})^2$ & $1 - \rho$ & 1 \\
\bottomrule
\end{tabular}
\caption{Consistency-robustness guarantees of methods that can achieve 1-consistency.
Here, $R \in [0, 3/4]$ and $\rho \in [0,1]$.
Note that \citep{jin2022online} is for the 2-staged setting.}
\label{tab:literature-comparision}
\end{table}

More broadly, \citet{lavastida2020learnable,lavastida2021using} learn and exploit parameters of the online matching problem and provide PAC-style guarantees.
\citet{dinitz2022algorithms} studied the use of multiple advice and seek to compete with the best on a per-instance basis. Finally, others suggest using advice to speedup offline matching via ``warm-start'' heuristics \citep{dinitz2021faster,chen2022faster,sakaue2022discrete}. 

\subsection{Distribution testing and distance estimation}\label{sec:distribution-testing-related-work}

In this work, we will use results from \citet{JHW18} for the problem of $L_1$ distance estimation.
This is closely related to \emph{tolerant identity testing}, where the tester's task is to distinguish whether a distribution $p$ is $\eps_1$-close to some known distribution $q$ from the case where $p$ is $\eps_2$-far from $q$, according to some natural distance measure.

The following theorem states the number of samples from an unknown distribution $p$ that needed by the algorithm in \cite{JHW18} to get an estimate of $L_1(p,q)$ for some reference distribution $q$ with additive error $\eps$ and error probability $\delta$.\footnote{It is our understanding that the tester proposed by \citet{JHW18} requires a significant amount of hyperparameter tuning and no off-the-shelf implementation is available \cite{JHW-email}.}

\begin{restatable}[adapted from \cite{JHW18}]{theorem}{minimax}
\label{thm:minimax}
Fix a reference distribution $q$ over a domain $T$ of size $|T| = r$ and let $s \in \cO \left(\frac{r \cdot \log(1/\delta)}{\eps^2 \cdot \log r} \right)$ be an even integer.
There exists an algorithm that draws $s_1 + s_2$ IID samples from an unknown distribution $p$ over $T$, where $s_1, s_2 \sim \mathrm{Poisson}(s/2)$, and outputs an estimate $\hat{L}_1$ such that $|\hat{L}_1 - L_1(p,q)| \leq \eps$ with success probability at least $1-\delta$.
\end{restatable}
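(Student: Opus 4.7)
The plan is to directly invoke the minimax $L_1$-distance estimator of \citet{JHW18} and then boost its success probability by the standard median trick. The main theorem in \cite{JHW18} addresses the more general two-sample problem of estimating $L_1(p,q)$ from samples of both $p$ and $q$; since here $q$ is a fixed, known reference distribution, their construction specializes by replacing the empirical estimate of $q_i$ with its exact value, which can only decrease the variance of the resulting statistic. Thus the one-sample version we need follows from their more general analysis with minor simplifications.

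At a high level, the JHW18 estimator uses a polynomial approximation of the absolute-value function $|\cdot|$: for each coordinate $i \in T$, an unbiased polynomial estimator of $|p_i - q_i|$ of degree roughly $k = \Theta(\log r)$ is constructed from factorial moments of the Poissonized empirical counts of $p$. Summing these coordinate-wise estimators over $i \in T$ yields an estimator of $L_1(p,q)$ whose bias is $\cO(\eps)$ by the best-polynomial-approximation error of $|\cdot|$ on $[-1,1]$, and whose variance concentrates to within $\cO(\eps)$ using $\Theta(r/(\eps^2 \log r))$ Poissonized samples, giving an $\eps$-accurate estimate with a constant success probability, say $2/3$. The Poissonization into two independent halves $s_1, s_2 \sim \mathrm{Poisson}(s/2)$ is convenient because it decouples the two subroutines inside the JHW18 estimator (a small-$p_i$ polynomial estimator and a large-$p_i$ plug-in estimator, applied based on a threshold test on the count of $i$) and makes the bin counts independent across $i$, which greatly simplifies the concentration arguments.

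To amplify the constant success probability to $1-\delta$, I would run $\Theta(\log(1/\delta))$ independent instances of the above estimator and output the median; a standard Chernoff bound shows that the median lies within $\eps$ of $L_1(p,q)$ with probability at least $1-\delta$, which multiplies the sample complexity by the $\log(1/\delta)$ factor appearing in the theorem statement. Poissonizing the total sample size $s$ and then splitting it into two halves at the outset introduces only a constant-factor overhead, preserving the order of magnitude of $s$.

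The main technical obstacle sits entirely inside the cited JHW18 result --- specifically, proving the $\Theta(\log r)$ polynomial-degree bound together with matching variance estimates that yield the $1/\log r$ improvement over the naive $r/\eps^2$ rate. Since we are invoking that result as a black box, no further work is needed beyond specializing to known $q$ and performing the median boosting described above.
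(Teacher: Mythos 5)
Your proposal matches the paper's own proof in all essentials: both invoke the JHW18 $L_1$-estimator as a black box to obtain an $\eps$-accurate estimate with constant success probability at sample complexity $\Theta\bigl(r/(\eps^2\log r)\bigr)$, and then amplify to $1-\delta$ via the standard median-of-$\Theta(\log(1/\delta))$-repeats trick. The only cosmetic difference is how you justify the constant success probability --- you appeal to a bias-plus-variance decomposition of the JHW18 estimator, whereas the paper uses the cleaner route of taking JHW18's expected-error bound at a constant-factor-larger sample size and applying Markov's inequality; both are valid and lead to the same conclusion.
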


The algorithm of \cref{thm:minimax} uses a standard technique in distribution testing known as \emph{Poissonization} which aims to eliminate correlations between samples at the expense of not having a fixed sample size.
Instead, the number of samples follows a Poisson distribution and we treat its mean as the sample complexity.
As a consequence, the known result regarding the concentration of the Poisson distribution would be helpful in bounding the overall algorithmic success probability, e.g.\ see \citep{canonne2019short}.

\begin{lemma}
\label{lem:poisson-bound}
For any $m > 0$ and any $x > 0$, we have $\Pr[\vert X-m\vert\geq x]\leq 2e^{-\frac{x^2}{2(m+x)}}$, where $X \sim \mathrm{Poisson}(m)$.
\end{lemma}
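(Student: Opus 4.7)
The plan is to prove this standard Poisson concentration bound via the Chernoff method, handling the upper and lower tails separately and combining them by a union bound. The key input is the moment generating function of the Poisson distribution: for $X \sim \mathrm{Poisson}(m)$ and any real $t$, $\E[e^{t(X-m)}] = \exp(m(e^t - 1 - t))$. I would first bound the upper tail, then the (easier) lower tail, and finally add the two bounds.

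For the upper tail, I would write $\Pr[X - m \geq x] \leq e^{-tx} \E[e^{t(X-m)}] = \exp(m(e^t - 1 - t) - tx)$ for any $t > 0$ by Markov's inequality. Optimizing in $t$ yields the minimizer $t^\star = \ln(1 + x/m)$, giving the Bennett-type bound $\Pr[X - m \geq x] \leq \exp(-m \varphi(x/m))$, where $\varphi(u) = (1+u)\ln(1+u) - u$. To get the advertised form, I would then invoke the elementary calculus inequality $\varphi(u) \geq u^2/(2(1+u))$ for $u \geq 0$ (a standard fact, provable by checking that $\varphi(0) = 0$ and comparing derivatives), which yields $m\,\varphi(x/m) \geq x^2/(2(m+x))$ and therefore $\Pr[X - m \geq x] \leq \exp(-x^2/(2(m+x)))$.

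For the lower tail, I would run the same argument with $t < 0$: optimizing gives $\Pr[X - m \leq -x] \leq \exp(-m\,\psi(x/m))$ where $\psi(u) = u + (1-u)\ln(1-u)$ for $0 < u \leq 1$ (and the bound is vacuous when $x > m$, which I would dispose of separately, noting $X \geq 0$). Here the analogous inequality $\psi(u) \geq u^2/2$ gives $\Pr[X - m \leq -x] \leq \exp(-x^2/(2m)) \leq \exp(-x^2/(2(m+x)))$ since $m \leq m+x$, i.e., the lower tail is dominated by the upper-tail bound. Adding the two tail estimates then yields
\[
\Pr[|X - m| \geq x] \leq 2 \exp\!\left(-\frac{x^2}{2(m+x)}\right),
\]
as required.

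I do not expect any real obstacle here; the only slightly delicate step is verifying the inequality $\varphi(u) \geq u^2/(2(1+u))$, which I would do by defining $f(u) = \varphi(u) - u^2/(2(1+u))$, observing $f(0) = 0$, and showing $f'(u) \geq 0$ for $u \geq 0$ via a short monotonicity argument on $\ln(1+u)$. Everything else is routine calculus and Markov's inequality, and the argument is a textbook instance of Bennett's inequality specialized to Poisson (see, e.g., the survey of \citet{canonne2019short} referenced in the preceding paragraph).
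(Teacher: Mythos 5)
Your proof is correct. The paper does not prove this lemma itself but cites \citep{canonne2019short}, which establishes the bound via exactly the same Chernoff/Bennett argument you outline (Poisson MGF, optimal tilting $t^\star = \ln(1+x/m)$ for the upper tail and its negative-$t$ analogue for the lower tail, followed by the elementary inequalities $\varphi(u) \geq u^2/(2(1+u))$ and $\psi(u) \geq u^2/2$, with the lower-tail bound dominated by the upper-tail one); your derivation matches that reference's approach.
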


\section{Impossibility for adversarial arrival model}
\label{sec:hardness}

Unfortunately, \cref{goal:test} is unattainable under the adversarial arrival model.
Our construction is based on generalizing the gadget in \cref{fig:gadget} to state \cref{thm:hardness}.

\begin{restatable}{theorem}{hardness}
\label{thm:hardness}
For even $n$, there exists input graphs $\cG_1$ and $\cG_2$ such that no advice can distinguish between the two within $n/2$ online arrivals. Consequently, an algorithm \emph{cannot} be both 1-consistent and strictly more than 1/2-robust.
\end{restatable}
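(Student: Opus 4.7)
The plan is to construct two input graphs $\cG_1$ and $\cG_2$ by concatenating $n/2$ disjoint copies of the gadget in \cref{fig:gadget}. Pair up the offline vertices into $(u_1,u_2),(u_3,u_4),\ldots,(u_{n-1},u_n)$. In both graphs, the first $n/2$ online vertices have neighborhoods $N(v_i) = \{u_{2i-1},u_{2i}\}$ and arrive in order. The graphs differ only on the second half: in $\cG_1$, the vertex $v_{n/2+i}$ has the single neighbor $u_{2i-1}$, while in $\cG_2$ it has the single neighbor $u_{2i}$. Both graphs admit perfect matchings, so $n^* = n$ in either case.

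Next I would establish the indistinguishability claim. Because the first $n/2$ arrivals are literally the same (same neighborhoods, same order) in $\cG_1$ and $\cG_2$, the transcript observed by the algorithm during that window is graph-independent. Hence, for any fixed advice string $\alpha$ and any fixed source of randomness, the algorithm's matching decisions on $v_1,\ldots,v_{n/2}$ coincide when the true graph is $\cG_1$ versus $\cG_2$. In other words, the conditional distribution of the first-half matching depends on $\alpha$ alone and not on which graph is realized.

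I would then use the 1-consistency hypothesis to pin down these first-half decisions. Let $\alpha^{(1)}$ be the ``perfect'' advice associated with $\cG_1$. Since the only perfect matching of $\cG_1$ must match each $v_{n/2+i}$ to its unique neighbor $u_{2i-1}$, achieving expected matching size $n$ on $\cG_1$ forces the algorithm, with probability one, to match $v_i \mapsto u_{2i}$ during the first half (any deviation rules out a perfect matching since $u_{2i-1}$ would be unavailable for $v_{n/2+i}$). By the indistinguishability above, the algorithm also plays $v_i \mapsto u_{2i}$ with probability one when it receives advice $\alpha^{(1)}$ but the true graph is $\cG_2$.

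Finally, I would derive the contradiction by instantiating the robustness adversary with $(\text{graph} = \cG_2,\ \text{advice} = \alpha^{(1)})$. In $\cG_2$, every second-half vertex $v_{n/2+i}$ has only the neighbor $u_{2i}$, which is already occupied from the first half, so no $v_{n/2+i}$ gets matched. The resulting matching has size exactly $n/2$, yielding competitive ratio $1/2$, which rules out any guarantee strictly better than $1/2$-robust. The only subtle point to formalize carefully is the indistinguishability step, namely that the advice is a single adversarially chosen object that cannot ``adapt'' within a run and that the first-half transcripts are identical pathwise; once this is stated cleanly, the consistency-to-robustness chain is a direct consequence of the gadget's structure.
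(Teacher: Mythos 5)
Your construction is the same as the paper's up to a relabeling of the offline vertices (you pair $u_{2i-1}$ with $u_{2i}$ while the paper pairs $u_j$ with $u_{j+n/2}$), and your argument — first-half transcripts are pathwise identical across $\cG_1$ and $\cG_2$, 1-consistency forces the first-half matches almost surely, so wrong advice leaves the entire second half unmatched, giving ratio exactly $1/2$ — is the same chain of reasoning the paper uses. Correct and essentially identical to the paper's proof.
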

\begin{proof}
Consider the restricted case where there are only two possible final offline graphs $\cG^{(1)} =(U \cup V^{(1)}, E^{(1)})$ and $\cG^{(2)} =(U \cup V^{(2)}, E^{(2)})$ where
\[
E^{(1)}
= \left\{ \{u^{(1)}_{j}, v^{(1)}_{j}\}, \{u^{(1)}_{j+n/2}, v^{(1)}_{j}\} : 1 \leq j \leq n/2 \right\}
\cup \left\{ {\color{blue}\{u^{(1)}_{j-n/2}, v^{(1)}_{j}\}} : n/2+1 \leq j \leq n \right\}
\]
\[
E^{(2)}
= \left\{ \{u^{(2)}_{j}, v^{(2)}_{j}\}, \{u^{(2)}_{j+n/2}, v^{(2)}_{j}\} : 1 \leq j \leq n/2 \right\}\\
\cup \left\{ {\color{blue}\{u^{(2)}_{j}, v^{(2)}_{j}\}} : n/2+1 \leq j \leq n \right\}    
\]
We will even restrict the first $n/2$ to be exactly $v^{(i)}_1, \ldots, v^{(i)}_{n/2}$, where $i \in \{1,2\}$ is the chosen input graph by the adversary.
See \cref{fig:hardness} for an illustration.

\begin{figure}[htb]
\centering
\resizebox{0.7\linewidth}{!}{
    \begin{tikzpicture}[square/.style={regular polygon,regular polygon sides=4}]
\tikzmath{\graphgap=0.75;}
\node[] at (1,0.75*\graphgap) {$\cG_1$};

\node[draw, thick, square, minimum size=10pt, inner sep=1pt] at (0,0) (u1) {};
\node[left=1pt of u1] {$u_1$};
\node[] at (0,-1*\graphgap) (udots) {$\vdots$};
\node[draw, thick, square, minimum size=10pt, inner sep=1pt] at (0,-2*\graphgap) (uhalf) {};
\node[left=1pt of uhalf] {$u_{\frac{n}{2}}$};
\node[draw, thick, square, minimum size=10pt, inner sep=1pt] at (0,-3*\graphgap) (uhalfplusone) {};
\node[] at (0,-4*\graphgap) (udots2) {$\vdots$};
\node[left=1pt of uhalfplusone] {$u_{\frac{n}{2}+1}$};
\node[draw, thick, square, minimum size=10pt, inner sep=1pt] at (0,-5*\graphgap) (un) {};
\node[left=1pt of un] {$u_n$};

\node[draw, thick, circle, minimum size=10pt, inner sep=1pt] at (2,0) (v1) {};
\node[right=1pt of v1] {$v_1$};
\node[] at (2,-1*\graphgap) (vdots) {$\vdots$};
\node[draw, thick, circle, minimum size=10pt, inner sep=1pt] at (2,-2*\graphgap) (vhalf) {};
\node[right=1pt of vhalf] {$u_{\frac{n}{2}}$};
\node[draw, thick, circle, minimum size=10pt, inner sep=1pt] at (2,-3*\graphgap) (vhalfplusone) {};
\node[] at (2,-4*\graphgap) (vdots2) {$\vdots$};
\node[right=1pt of vhalfplusone] {$v_{\frac{n}{2}+1}$};
\node[draw, thick, circle, minimum size=10pt, inner sep=1pt] at (2,-5*\graphgap) (vn) {};
\node[right=1pt of vn] {$v_n$};

\draw[thick] (v1) -- (u1);
\draw[thick] (v1) -- (uhalfplusone);
\draw[thick] (vdots) -- (udots);
\draw[thick] (vdots) -- (udots2);
\draw[thick] (vhalf) -- (uhalf);
\draw[thick] (vhalf) -- (un);
\draw[thick, red] (vhalfplusone) -- (u1);
\draw[thick, red] (vdots2) -- (udots);
\draw[thick, red] (vn) -- (uhalf);

\draw[thick, dashed] (-0.5,-2.5*\graphgap) -- (7.5,-2.5*\graphgap);

\node[] at (6,0.75*\graphgap) {$\cG_2$};

\node[draw, thick, square, minimum size=10pt, inner sep=1pt] at (5,0) (cu1) {};
\node[left=1pt of cu1] {$u_1$};
\node[] at (5,-1*\graphgap) (cudots) {$\vdots$};
\node[draw, thick, square, minimum size=10pt, inner sep=1pt] at (5,-2*\graphgap) (cuhalf) {};
\node[left=1pt of cuhalf] {$u_{\frac{n}{2}}$};
\node[draw, thick, square, minimum size=10pt, inner sep=1pt] at (5,-3*\graphgap) (cuhalfplusone) {};
\node[] at (5,-4*\graphgap) (cudots2) {$\vdots$};
\node[left=1pt of cuhalfplusone] {$u_{\frac{n}{2}+1}$};
\node[draw, thick, square, minimum size=10pt, inner sep=1pt] at (5,-5*\graphgap) (cun) {};
\node[left=1pt of cun] {$u_n$};

\node[draw, thick, circle, minimum size=10pt, inner sep=1pt] at (7,0) (cv1) {};
\node[right=1pt of cv1] {$v_1$};
\node[] at (7,-1*\graphgap) (cvdots) {$\vdots$};
\node[draw, thick, circle, minimum size=10pt, inner sep=1pt] at (7,-2*\graphgap) (cvhalf) {};
\node[right=1pt of cvhalf] {$u_{\frac{n}{2}}$};
\node[draw, thick, circle, minimum size=10pt, inner sep=1pt] at (7,-3*\graphgap) (cvhalfplusone) {};
\node[] at (7,-4*\graphgap) (cvdots2) {$\vdots$};
\node[right=1pt of cvhalfplusone] {$v_{\frac{n}{2}+1}$};
\node[draw, thick, circle, minimum size=10pt, inner sep=1pt] at (7,-5*\graphgap) (cvn) {};
\node[right=1pt of cvn] {$v_n$};

\draw[thick] (cv1) -- (cu1);
\draw[thick] (cv1) -- (cuhalfplusone);
\draw[thick] (cvdots) -- (cudots);
\draw[thick] (cvdots) -- (cudots2);
\draw[thick] (cvhalf) -- (cuhalf);
\draw[thick] (cvhalf) -- (cun);
\draw[thick, red] (cvhalfplusone) -- (cuhalfplusone);
\draw[thick, red] (cvdots2) -- (cudots2);
\draw[thick, red] (cvn) -- (cun);
\end{tikzpicture}
}
\caption{Illustration of $\cG_1$ and $\cG_2$ for \cref{thm:hardness}}
\label{fig:hardness}
\end{figure}

Suppose $\cG_i$ was the chosen graph, for $i \in \{1,2\}$.
In this restricted problem input setting, the strongest possible advice is knowing the bit $i$ since all other viable advice can be derived from this bit.
Thus, for the sake of a hardness result, it suffices to only consider the advice of $\hat{i} \in \{1,2\}$.

Within the first $n/2$ arrivals, any algorithm cannot distinguish and will behave in the same manner.
Suppose there is a 1-consistent algorithm $A$ given bit $\hat{i}$.
In the first $n/2$ steps, $A$ needs to match $v_j$ to $u_{j+n/2}$ if $\hat{i} = 1$ and $v_j$ to $u_j$ for $\hat{i} = 2$.
However, if $i \neq \hat{i}$, then $A$ will not be able to match any remaining arrivals and hence be at most $1/2$-robust.
\end{proof}

In fact, \cref{thm:hardness} can be strengthened: for any $\alpha \in [0, 1/2]$, no algorithm can be simultaneously $(1-\alpha)$-consistent and strictly more than $(1/2+\alpha)$-robust. The proof is essentially identical and deferred to \cref{sec:appendix-extension-hardness}.

While \cref{thm:hardness} appears simple, we stress that hardness results for learning-augmented algorithms are rare, since the form of advice and its utilization is arbitary.
For instance, \citet{aamand2022optimal} only showed that when advice is the true degrees of the offline vertices, there exist inputs such that any learning-augmented algorithm can only achieve a competitive ratio of at most $1 - 1/e + o(1)$.

\section{Imperfect advice for random arrival model}
\label{sec:imperfect-advice}

In this section, we present our learning-augmented algorithm \textsc{TestAndMatch} which is $1$-consistent, $(\beta - o(1))$-robust, and achieves a smooth interpolation on an appropriate notion of advice quality, where $\beta$ is any achieveable competitive ratio by some advice-free baseline algorithm.
As discussed in \cref{sec:prelims}, the best known competitive ratio of $\beta = 0.696$ is achieveable using \textsc{Ranking} \citep{karp1990optimal} but it is unknown if it can be improved.
In fact, \textsc{TestAndMatch} is a meta-algorithm that uses any advice-free baseline algorithm as a blackbox and so our robustness guarantee improves as $\beta$ improves.

\paragraph{Using realized type counts as advice.} 
Given the final offline graph $\cG^* = (U \cup V, E)$ with maximum matching size $n^* \leq n$, we can classify each online vertex based on their \emph{types}, i.e., the set of offline vertices they are adjacent to \citep{borodin2020experimental}.
Define the vector $c^* \in \mathbb{Z}^{2^n}$ indexed by the possible types $2^U$, such that $c^*(t)$ is the number of times type $t \in 2^U$ occurs in $\cG^*$.
Even though there are $2^n$ possible types, the number of \emph{realized} types is at most $n$. 
Let $T^* \subseteq 2^U$ be the set of types with non-zero counts in $c^*$.
Since $|U|=|V|=n$, $c^*$ is sparse and contains $r^* = |T^*| \leq n \ll 2^n$ non-zero elements; see \cref{fig:type-example}.
Note that $c^*$ fully determines $\cG^*$ for our purposes, as vertices may be permuted but $n^*$ remains identical.

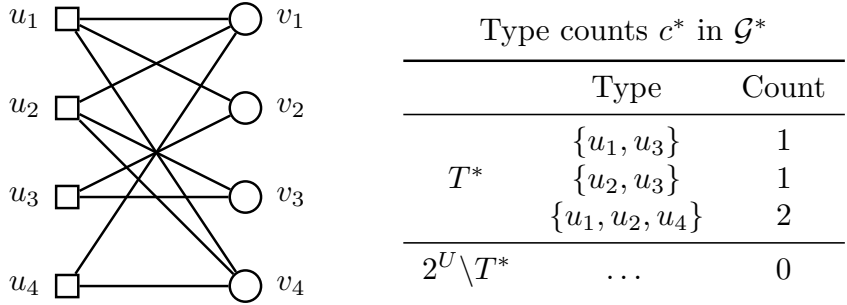
\begin{figure}[h]
\centering
\resizebox{0.7\linewidth}{!}{
    \begin{tikzpicture}[square/.style={regular polygon,regular polygon sides=4}]
\node[draw, thick, square, minimum size=10pt, inner sep=1pt] at (0,0) (u1) {};
\node[left=1pt of u1] {$u_1$};
\node[draw, thick, square, minimum size=10pt, inner sep=1pt] at (0,-1) (u2) {};
\node[left=1pt of u2] {$u_2$};
\node[draw, thick, square, minimum size=10pt, inner sep=1pt] at (0,-2) (u3) {};
\node[left=1pt of u3] {$u_3$};
\node[draw, thick, square, minimum size=10pt, inner sep=1pt] at (0,-3) (u4) {};
\node[left=1pt of u4] {$u_4$};

\node[draw, thick, circle, minimum size=10pt, inner sep=1pt] at (2,0) (v1) {};
\node[right=1pt of v1] {$v_1$};
\node[draw, thick, circle, minimum size=10pt, inner sep=1pt] at (2,-1) (v2) {};
\node[right=1pt of v2] {$v_2$};
\node[draw, thick, circle, minimum size=10pt, inner sep=1pt] at (2,-2) (v3) {};
\node[right=1pt of v3] {$v_3$};
\node[draw, thick, circle, minimum size=10pt, inner sep=1pt] at (2,-3) (v4) {};
\node[right=1pt of v4] {$v_4$};

\draw[thick] (v1) -- (u1);
\draw[thick] (v1) -- (u2);
\draw[thick] (v1) -- (u4);
\draw[thick] (v2) -- (u1);
\draw[thick] (v2) -- (u3);
\draw[thick] (v3) -- (u2);
\draw[thick] (v3) -- (u3);
\draw[thick] (v4) -- (u1);
\draw[thick] (v4) -- (u2);
\draw[thick] (v4) -- (u4);

\node[] at (6.25,-0.15) {Type counts $c^*$ in $\cG^*$};
\node[] at (6.25,-1.75) {
\begin{tabular}{ccc}
\toprule
& Type & Count\\
\midrule
& $\{u_1, u_3\}$ & 1\\
$T^*$ & $\{u_2, u_3\}$ & 1\\
& $\{u_1, u_2, u_4\}$ & 2\\
\midrule
$2^U \backslash T^*$ & \dots & 0\\
\end{tabular}
};
\end{tikzpicture}
}
\caption{For $n=4$, there may be $2^4 = 16$ possible types but at most $n = 4$ of them can ever be non-zero.
Here, $c^*(\{ u_1, u_3 \}) = 1, c^*(\{ u_2, u_3 \}) = 1$ and $c^*(\{ u_1, u_2, u_4\}) = 2$.
We see that type $\{u_1, u_2, u_4\}$ appears twice in $c^*$ and $|T^*|=3$.}
\label{fig:type-example}
\end{figure}

In this work, we consider advice to be an estimate of the \emph{realized type counts} $\hat{c} \in \mathbb{Z}^{2^n}$
with non-zero entries in $\hat{T} \subseteq 2^U$.
As before, we assume that $\hat{c}$ sums to $n$ and contains $\hat{r} = |\hat{T}| \leq n \ll 2^n$ non-zero entries.
Just like $c^*$, $\hat{c}$ fully defines some ``advice graph'' $\hat{\cG}=(U \cup V, \hat{E})$ that we can find a maximum matching for in polynomial time.
We discuss the practicality of obtaining such advice in \cref{sec:appendix-examples-realized-counts-advice}.

Throughout this section, we will use star $(\cdot)^*$ and hat $\hat{(\cdot)}$ to denote ground truth and advice quantities respectively.
In particular, we use $n^* \leq n$ and $\hat{n} \leq n$ to denote the maximum matching size in the final offline graph $\cG^*$ and advice graph $\hat{\cG}$ respectively.
Note that star $(\cdot)^*$ quantities are not known and exist purely for the purpose of analysis.

\begin{algorithm}[t]
\caption{$\textsc{TestAndMatch}$}
\label{alg:testandmatch}
\begin{algorithmic}[1]
    \INPUT Advice $\hat{c}$ with $\hat{r} = |\hat{T}|$, \textsc{Baseline} advice-free algorithm with competitive ratio $\beta < 1$, error threshold $\eps > 0$, failure rate $\delta = \delta' + \delta_{poi}$ for $\delta_{poi}=O\left(\frac{1}{poly(\hat{r})}\right)$
    \STATE Compute advice matching $\hat{M}$ from $\hat{c}$
    \IF{$\frac{\hat{n}}{n} \leq \beta$}
        \STATE Run \textsc{Baseline} on all arrivals
    \ENDIF
    \STATE Define $s_{\hat{r},\eps,\delta} = \cO \left( \frac{(\hat{r}+1) \cdot \log(1/\delta')}{\eps^2 \cdot \log (\hat{r}+1)} \right)$
    \STATE Define testing threshold $\tau = 2 \left( \frac{\hat{n}}{n} - \beta \right) - \eps$
    \STATE Run \textsc{Mimic} on $s_{\hat{r},\eps,\delta} \cdot \sqrt{\log (\hat{r}+1)}$ arrivals while keeping track of the online arrivals in a set $A$
    \IF{$\textsc{MinimaxTest}(s_{\hat{r},\eps,\delta}, \frac{\hat{c}}{n}, A, \tau, \delta')=$ ``Pass''}
        \STATE Run \textsc{Mimic} on the remaining arrivals
    \ELSE
        \STATE Run \textsc{Baseline} on the remaining arrivals
    \ENDIF
\end{algorithmic}
\end{algorithm}

\paragraph{Intuition behind \textsc{TestAndMatch}.}
If $\hat{c} = c^*$, one trivially obtains a 1-consistency by solving for a maximum matching $\hat{M}$ on the advice graph $\hat{\cG}$ and then mimicking matches based on $\hat{M}$ as vertices arrive.
While $\hat{c} \neq c^*$ in general, we may consider distributions $p^*=c^*/n$ and $q=\hat{c}/n$ and test if $p^*$ is close to $q$ in $L_1$ distance via \cref{thm:minimax}; 
this is done sample efficiently using just the first $o(n)$ online vertices (\cref{sec:estimating-advice-quality}).
If $L_1(p^*, q)$ is less than some threshold $\tau$, we conclude $\hat{c} \approx c^*$ and continue mimicking $\hat{M}$, enjoying a competitive ratio close to $1$. If not, we revert to \textsc{Baseline}. Crucially, each wrong match made during the testing phase hurts our final matching size by at most a constant, yielding a competitive ratio of $\beta - o(1)$.

\textsc{TestAndMatch} is described in \cref{alg:testandmatch}, which takes as input a number of additional parameters ($\delta$, $\epsilon$, etc) and subroutines that we will explain in a bit.
For now, we state our main result describing the performance of \textsc{TestAndMatch} in terms of the competitive ratio.

\begin{theorem}
\label{thm:main-result}
For any advice $\hat{c}$ with $|\hat{T}| = \hat{r}$, $\eps > 0$ and $\delta > \frac{1}{poly(\hat{r})}$, let $\hat{L}_1$ be the estimate of $L_1(p^*,q)$ obtained from $k = s_{\hat{r},\eps,\delta} \cdot \sqrt{\log (\hat{r}+1)}$ IID samples of $p^*$.
\textsc{TestAndMatch} produces a matching of size $m$ with competitive ratio of at least $\frac{\hat{n}}{n} - \frac{L_1(p,q)}{2} \geq \beta$ when $\hat{L}_1 \leq 2 \left( \frac{\hat{n}}{n} - \beta \right) - \eps$, and at least $\beta \cdot (1 - \frac{k}{n})$ otherwise, with success prob.\ $1-\delta$.
\end{theorem}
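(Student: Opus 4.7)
The plan is to split on the two top-level branches of \textsc{TestAndMatch}. If $\hat{n}/n \leq \beta$, the algorithm runs \textsc{Baseline} on all arrivals and is $\beta$-competitive trivially, so assume $\hat{n}/n > \beta$. Under random order, each arrival's type is (marginally) distributed as $p^* = c^*/n$, and sampling without replacement costs us only negligible slack in the IID approximation. The Poissonized sample sizes $s_1,s_2 \sim \mathrm{Poisson}(s_{\hat{r},\eps,\delta}/2)$ concentrate around their means by \cref{lem:poisson-bound}, so with probability at least $1-\delta_{poi}$ the testing phase uses at most $k = s_{\hat{r},\eps,\delta}\sqrt{\log(\hat{r}+1)}$ arrivals; conditioned on this, \cref{thm:minimax} guarantees $|\hat{L}_1 - L_1(p^*,q)| \leq \eps$ with probability at least $1-\delta'$. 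A union bound then delivers overall success at least $1-\delta$.

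Suppose first that the test passes, i.e., $\hat{L}_1 \leq \tau = 2(\hat{n}/n - \beta) - \eps$, so that $L_1(p^*,q) \leq 2(\hat{n}/n - \beta)$. The central combinatorial claim is that \textsc{Mimic} with the precomputed advice matching $\hat{M}$, run on all $n$ arrivals, produces at least $\hat{n} - (n/2)\cdot L_1(p^*,q)$ matches. For each type $t$, let $m_t \leq n q(t)$ be the number of type-$t$ vertices matched by $\hat{M}$, so $\sum_t m_t = \hat{n}$; by design, mimicking matches $\min(m_t,c^*(t))$ of them, where $c^*(t) = n p^*(t)$. Then
\[
\sum_t \min(m_t,c^*(t)) \;\geq\; \hat{n} - \sum_t (m_t - c^*(t))^+ \;\geq\; \hat{n} - \sum_t (n q(t) - c^*(t))^+ \;=\; \hat{n} - \tfrac{n}{2}\, L_1(p^*,q),
\]
where the last equality uses that $nq$ and $c^*$ both sum to $n$, so the positive and negative parts of $nq - c^*$ have equal total mass $\tfrac{n}{2}L_1(p^*,q)$. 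Dividing by $n^* \leq n$ yields competitive ratio at least $\hat{n}/n - L_1(p^*,q)/2 \geq \beta$.

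If instead the test fails, \textsc{TestAndMatch} reverts to \textsc{Baseline} on the remaining $n-k$ arrivals (we pessimistically count the testing phase as contributing zero matches). Under random order these form a uniformly random size-$(n-k)$ subset $S \subseteq V$. Fixing any maximum matching $M^*$ of $\cG^*$, each of its $n^*$ online endpoints lies in $S$ with probability $(n-k)/n$, so the expected maximum matching size of the sub-instance is at least $n^*(1-k/n)$; applying \textsc{Baseline}'s $\beta$-competitiveness against this sub-instance gives expected matches at least $\beta n^* (1-k/n)$, hence competitive ratio at least $\beta(1-k/n)$. The main obstacle is the combinatorial lower bound in the pass case above: its clean form is exactly what dictates the threshold $\tau = 2(\hat{n}/n - \beta) - \eps$, since we need the $L_1$ slack of $2(\hat{n}/n-\beta)$ allowed through the test to be precisely what \textsc{Mimic} can absorb without dropping below $\beta$.
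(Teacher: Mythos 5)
Your high-level structure — split on whether \textsc{MinimaxTest} passes, union-bound the Poisson-overflow event with the estimator failure, and conclude — matches the paper's decomposition into \cref{lem:run-on-after-k-steps}, \cref{lm:main-result}, and the failure-probability accounting. Your Case~1 argument is a welcome strengthening of what the paper merely asserts: the decomposition
\[
\sum_t \min(m_t, c^*(t)) \;=\; \hat{n} - \sum_t (m_t - c^*(t))^+ \;\geq\; \hat{n} - \sum_t (n q(t) - c^*(t))^+ \;=\; \hat{n} - \tfrac{n}{2} L_1(p^*,q)
\]
is exactly the right justification for the paper's unproven claim that \textsc{Mimic} yields $\geq \hat{n} - L_1(c^*,\hat{c})/2$ matches.

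There are two gaps, however. First, in your Case~2 you ``pessimistically count the testing phase as contributing zero matches'' and then argue the Baseline sub-instance has expected maximum matching $\geq n^*(1-k/n)$ by considering only which online endpoints of $M^*$ fall into the suffix $S$. This ignores that \textsc{Mimic} has already \emph{consumed} up to $j$ offline vertices during testing, which can destroy additional edges of $M^*$ restricted to $S$; your claimed lower bound on the sub-instance optimum is therefore not established. The fix is precisely the opposite of pessimism: you must \emph{credit} the $j$ testing-phase matches. The paper's \cref{lem:run-on-after-k-steps} does exactly this, arguing the postfix optimum drops by at most $k+j$ (one per departed online vertex, one per consumed offline vertex), giving $m \geq \beta(n-k-j) + j$, and then observing $\beta(n-k-j)+j \geq \beta(n-k)$ because $\beta < 1$. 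Without this credit-and-cancel step, the $j$-dependence does not wash out. Second, your dismissal of without-replacement sampling as ``negligible slack'' in the IID approximation is not justified: \cref{thm:minimax} requires genuine IID samples, and the paper obtains them exactly via the \textsc{SimulateP} resampling trick (re-drawing a previously seen arrival with probability $i/n$ at step $i$), not by appeal to an unquantified approximation. Your union-bound accounting of $\delta_{poi}$ and $\delta'$, and the handling of the trivial branch $\hat{n}/n \leq \beta$, are fine.
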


For sufficiently large $n$ and constants $\eps, \delta$, we have $s_{\hat{r},\eps,\delta} \cdot \sqrt{\log (\hat{r}+1)} \in o(1)$, so \cref{thm:main-result} implies a lower bound on the achieved competitive ratio of $\frac{m}{n^*}$ (see \cref{fig:ratio-plot}) where
\[
\frac{m}{n^*}
\geq \frac{m}{n}
\geq
\begin{cases}
\frac{\hat{n}}{n} - \frac{L_1(p^*, q)}{2} & \text{when $\hat{L}_1 \leq 2 \left( \frac{\hat{n}}{n} - \beta \right) - \eps$}\\
\beta \cdot (1 - o(1)) & \text{otherwise}
\end{cases}
\]

\begin{figure}[htb]
\centering
\resizebox{0.5\linewidth}{!}{
    \begin{tikzpicture}
\draw[thick, -stealth] (0,0) -- (0,4);
\draw[thick, -stealth] (0,0) -- (6,0);
\draw[dashed] (-0.25,3) node[left] {$\frac{\hat{n}}{n}$} -- (5,3);
\draw[dashed] (-0.25,1.5) node[left] {$\beta$} -- (5,1.5);
\draw[dashed] (-0.25,1) -- (5,1) node[right] {$\beta \cdot (1 - o_n(1))$};
\draw[dashed] (5,-0.25) node[below] {$1$} -- (5,3);
\draw[dashed] (2.5,-0.25) node[below] {$2 \left( \frac{\hat{n}}{n} - \beta \right) - 2 \eps$} -- (2.5,3);

\node[] at (-0.25,-0.25) {$0$};
\node[] at (6,-0.5) {$L_1(p^*, q)$};
\node[] at (2.6,3.75) {
\begin{tabular}{l}
Lower bound on achieved\\
competitive ratio (w.p.\ $\geq 1 - \delta$)
\end{tabular}};
\draw[ultra thick, blue] (0,3) -- (2.5,1.5);
\draw[ultra thick, blue] (2.5,1) -- (5,1);
\end{tikzpicture}
}
\caption{A (conservative) competitive ratio plot for $\frac{\hat{n}}{n} > \beta$. If \textsc{MinimaxTest} (\cref{alg:minimaxtest}) succeeds, we have $L_1(p^*, q) < 2 \left( \frac{\hat{n}}{n} - \beta \right) - 2 \eps$ whenever $\hat{L}_1 < 2 \left( \frac{\hat{n}}{n} - \beta \right) - \eps$. Observe that there is a smooth interpolation between the achieveable competitive ratio as $L_1(p^*, q)$ degrades whilst paying only $o(1)$ for robustness.}
\label{fig:ratio-plot}
\end{figure}
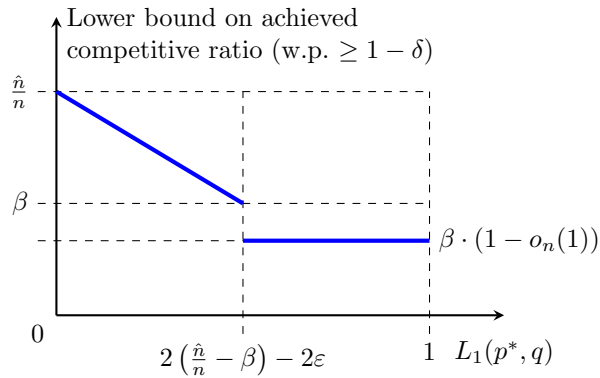

Under random order arrivals, the competitive ratio is measured in expectation over all possible arrival sequences.
One can easily convert the guarantees of \cref{thm:main-result} to one in expectation by assuming the extreme worst case scenario of obtaining 0 matches whenever the tester fails.
So, the expected competitive ratio is simply $(1 - \delta)$ factor of the bounds given in \cref{thm:main-result}.
Setting $\delta = 0.001$, we get a robustness guarantee of $\beta \cdot (1 - o(1)) \cdot 0.999$ in expectation.
Note that our guarantees hold regardless of what value of $\eps$ is used.
In the event that a very small $\eps$ is chosen and the test always fails, we are still guaranteed the robustness guarantees of $\approx \beta$.
One possible default for $\eps$ could be to assume that the optimal offline matching has size $n$ and just set it to half the threshold value, i.e. set $\eps = \hat{n}/n - \beta$.

\paragraph{Remark about lines 4 and 6 in \textsc{TestAndMatch}.}
As we subsequently require $\textrm{Poisson}(s_{\hat{r},\eps,\delta})$ IID samples from $p^*$ for testing, we collect $s_{\hat{r},\eps,\delta} \cdot \sqrt{\log (\hat{r}+1)}$ online arrivals into the set $A$.
Note that $\E[\textrm{Poisson}(s_{\hat{r},\eps,\delta})] = s_{\hat{r},\eps,\delta}$ and $\textrm{Poisson}(s_{\hat{r},\eps,\delta}) \leq s_{\hat{r},\eps,\delta} \cdot \sqrt{\log (\hat{r}+1)}$ with high probability.
This additional slack of $\sqrt{\log (\hat{r}+1)}$ allows for \cref{thm:main-result} to hold with high probability (as opposed to constant) while ensuring that the competitive ratio remains in the $\beta \cdot (1-o(1))$ regime.
Finally, when $r \in \Omega(n)$, we remark that $s_{\hat{r},\eps,\delta}$ is sublinear in $n$ only for sufficiently large $n$; see \cref{sec:extensions} for some practical modifications.

The rest of this section is devoted to describing \cref{alg:testandmatch} and proving \cref{thm:main-result}.
We study in \cref{sec:effect-of-advice-quality} how mimicking poor advice quality impacts matching sizes, yielding conditions where mimicking is desirable, which we test for via \cref{thm:minimax}.
\cref{sec:estimating-advice-quality} describes transformations to massage our problem into the form required by \cref{thm:minimax}.
Lastly, we tie up our analysis of \cref{thm:main-result} in \cref{sec:advice-property-testing}.

\subsection{Effect of advice quality on matching sizes}
\label{sec:effect-of-advice-quality}

\begin{algorithm}[t]
\caption{$\textsc{Mimic}$}
\label{alg:mimic-subroutine}
\begin{algorithmic}[1]
    \INPUT Matching $\hat{M}$, advice counts $\hat{c}$, arrival type label $t$
    \IF{$c(t) > 0$}
        \STATE Mimic an arbitrary unused type $t$ match in $\hat{M}$
        \STATE Decrement $c(t)$ by 1
    \ENDIF
    \STATE \textbf{return} $c$ \COMMENT{Updated counts}
\end{algorithmic}
\end{algorithm}

Given an advice $\hat{c}$ of type counts, we first solve optimally for a maximum matching $\hat{M}$ on the advice graph $\hat{\cG}$ and then mimic the matches for online arrivals whenever possible; see \cref{alg:mimic-subroutine}.
That is, whenever new vertices arrive, we match according to some unused vertex of the same type if possible and leave it unmatched otherwise.

It is useful to normalize counts as $p^* = c^*/n$ and $q = \hat{c}/{n}$. 
These are distributions on the realized and predicted (by advice) counts, and have sparse support $T^*$ and $\hat{T}$.

Now, suppose $\hat{M}$ has matching size $\hat{n} \leq n$.
By definition of $L_1(c^*, \hat{c})$ and \textsc{Mimic}, one would obtain a matching of size at least $\hat{n} - L_1(c^*, \hat{c}) / 2$ by blindly following advice.
This yields a competitive ratio of $\frac{\hat{n} - L_1(c^*, \hat{c}) / 2}{n^*}$.
Rearranging, we see that \textsc{Mimic} outperforms the advice-free baseline (in terms of worst case guarantees) if and only if
\begin{align*}
\frac{\hat{n} - L_1(c^*, \hat{c}) / 2}{n^*} > \beta
\iff &
L_1(p^*, q) < \frac{2}{n} \left( \hat{n} - \beta n^* \right)
\end{align*}

The above analysis suggests a natural way to use advice type counts: use \textsc{Mimic} if $L_1(p^*, q) \leq \frac{2}{n} \left( \hat{n} - \beta n^* \right)$, and \textsc{Baseline} otherwise.
Note that one should always just use \textsc{Baseline} whenever $\frac{\hat{n}}{n^\ast} < \beta$, matching the natural intuition of ignoring advice of poor quality.

Unfortunately, as we only know $n$ but not $n^*$, our algorithm conservatively checks whether $L_1(p^*, q) < 2 \left( \hat{n}/n - \beta \right)$, and so the resulting guarantee is \emph{conservative} since $n^* \leq n$.

\subsection{Estimating advice quality via property testing}
\label{sec:estimating-advice-quality}
As $c^*$ is unknown, we cannot obtain $L_1(p^*,q)$. However, $p^*$ and $q$ are distributions and we can apply the property testing method of \cref{thm:minimax} to estimate $L_1(p^*,q)$ to some $\eps > 0$ accuracy. 
Applying \cref{thm:minimax} raises two difficulties.

\paragraph{Simulating IID arrivals.} Under the Random Order arrival model (\cref{sec:prelims}), online vertices arrive ``without replacement'', which is incompatible with \cref{thm:minimax}.
Thankfully, we can apply a standard trick to simulate IID ``sampling with replacement'' from $p^*$ by ``re-observing arrivals''.
See \textsc{SimulateP} (\cref{alg:simulatep-subroutine}) in the appendix for details.

\paragraph{Operating in reduced domains.} Strictly speaking, the domain of $p^*$ and $q$ could be as large as $2^n$, since any one of these types may occur. If all of these types occur with non-zero probability, then applying \cref{thm:minimax} for testing could take a near-exponential (in $n$) number of online vertex arrivals, which is clearly impossible.
However, as established earlier, $p^*$ and $q$ enjoy sparsity; in particular, $\hat{c}$ and thus $q$ contain $0$ in all but at most $\hat{r} = |\hat{T}| \leq n \ll 2^n$ entries. 
The key insight is to express $L_1$ distances by operating on $\hat{T}$, plus an additional dummy type $t_0$ which has $0$ counts in $\hat{c}$. Whenever we observe an online vertex with type $t \in T^* \backslash \hat{T}$, we classify it as $t_0$. Specifically,
\[
L_1(p^*, q)
= \sum_{t \in 2^U} |p^*(t) - q(t)| 
= \sum_{t \in \hat{T} \cup T^*} |p^*(t) - q(t)|
= \sum_{t \in \hat{T}} |p^*(t) - q(t)| + \sum_{t \in T^* \backslash \hat{T}} p^*(t),
\]
which we can view as an $L_1$ distance on distributions with support $\hat{T} \cup \{t_0 \} $. Thus, the domain size when applying \cref{thm:minimax} is $\hat{r}+1 \leq n + 1$.
For any constant $\epsilon, \delta > 0$, the required samples is $s_{\hat{r}, \eps, \delta} \cdot \sqrt{\log (\hat{r}+1)} \in o(n)$.

Now that these difficulties are overcome, the estimation of $L_1(p^*,q) = L_1(\frac{c^*}{n}, \frac{\hat{c}}{n})$ is done via \textsc{MinimaxTest} (\cref{alg:minimaxtest}), whose correctness follows from \cref{thm:minimax}.

\begin{algorithm}[t]
\caption{$\textsc{MinimaxTest}(s, \frac{\hat{c}}{n}, A, \tau, \delta')$}
\label{alg:minimaxtest}
\begin{algorithmic}[1]
    \INPUT Sample size $s$, distribution $q=\hat{c}/n$, $s \sqrt{\log (\hat{r}+1)}$ online arrivals $A$, testing threshold $\tau$, failure rate $\delta'$
    \STATE Compute $s_1, s_2 \sim \mathrm{Poisson}(s/2)$
    \IF{$s_1 + s_2 > s \sqrt{\log (\hat{r}+1)}$}
        \STATE \textbf{return} ``Fail''
        \COMMENT{occurs w.p. $ \delta_{poi}\leq 1/poly(\hat{r})$}
    \ENDIF
    \STATE Collect $s_1 + s_2$ IID samples from $p^* = \frac{c^*}{n}$ by running \textsc{SimulateP} with $A$.
    \STATE Run algorithm of \cref{thm:minimax} to obtain estimate $\hat{L}_1$ such that $|\hat{L}_1 - L_1(p,q)| \leq \eps$ with probability $1-\delta'$
    \STATE \textbf{return} $\hat{L}_1 < \tau$
\end{algorithmic}
\end{algorithm}

\begin{lemma}\label{lem:minimax_error}
Suppose \textsc{MinimaxTest} uses the estimator of \cref{thm:minimax} and passes if and only if $\hat{L}_1 < \tau$.
Given $s_{\hat{r},\eps,\delta} \cdot \sqrt{\log (\hat{r}+1)}$ online arrivals in a set $A$, we have $L_1(p^*, q) < 2 \left( \hat{n}/n - \beta \right)$ whenever \textsc{MinimaxTest} passes.
The success probability of \textsc{MinimaxTest} is at least $1 - \delta$.
\end{lemma}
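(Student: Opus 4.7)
The plan is to split the lemma into a deterministic consequence of two ``good events'' and a union bound certifying that both events hold with probability at least $1 - \delta$. The two events are (a) \emph{Poisson concentration}, namely $s_1 + s_2 \leq s\sqrt{\log(\hat{r}+1)} = |A|$ where $s = s_{\hat{r},\eps,\delta}$, so that \textsc{SimulateP} can extract $s_1 + s_2$ IID samples from $p^* = c^*/n$ out of the arrivals in $A$; and (b) \emph{estimator accuracy}, namely $|\hat{L}_1 - L_1(p^*, q)| \leq \eps$.

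On the intersection of (a) and (b) the main implication is immediate from the choice of threshold. \textsc{MinimaxTest} passes iff $\hat{L}_1 < \tau = 2(\hat{n}/n - \beta) - \eps$, and combining this with (b) gives $L_1(p^*, q) \leq \hat{L}_1 + \eps < 2(\hat{n}/n - \beta)$. The extra $-\eps$ slack baked into $\tau$ is precisely what absorbs the additive error of the estimator. Thus the only real work is controlling the probability of the two bad events.

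Event (b) fails with probability at most $\delta'$ directly by \cref{thm:minimax}, applied to the reduced domain $\hat{T} \cup \{t_0\}$ of size $\hat{r}+1$ and fed the samples produced by \textsc{SimulateP} (whose correctness under random-order arrivals I would invoke from the appendix). For event (a), I would apply \cref{lem:poisson-bound} to $X = s_1 + s_2 \sim \mathrm{Poisson}(s)$ with $m = s$ and $x = s(\sqrt{\log(\hat{r}+1)} - 1)$, giving
\[
\Pr\!\left[s_1 + s_2 > s\sqrt{\log(\hat{r}+1)}\right] \leq 2\exp\!\left(-\frac{s\,(\sqrt{\log(\hat{r}+1)} - 1)^2}{2\sqrt{\log(\hat{r}+1)}}\right).
\]
Substituting $s = \Theta\!\left(\frac{(\hat{r}+1)\log(1/\delta')}{\eps^2 \log(\hat{r}+1)}\right)$ makes the exponent of order $\Omega\!\left(\frac{\hat{r}+1}{\sqrt{\log(\hat{r}+1)}}\right)$, so this probability drops below some $\delta_{poi} \leq 1/\mathrm{poly}(\hat{r})$, matching the parameter used in \cref{alg:testandmatch}. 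A single union bound over the two bad events then yields total failure probability at most $\delta' + \delta_{poi} = \delta$.

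The main (and essentially only nontrivial) obstacle is the Poisson tail calculation: one must check that the $\sqrt{\log(\hat{r}+1)}$ slack factor built into $|A|$ is big enough to drive the overshoot probability below an inverse polynomial in $\hat{r}$ despite the modest size of $s_{\hat{r},\eps,\delta}$. Everything else is either a one-line invocation of \cref{thm:minimax}, a standard reduction through \textsc{SimulateP}, or algebraic bookkeeping around the definition of $\tau$.
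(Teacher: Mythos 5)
Your proposal is correct and follows essentially the same approach as the paper: condition on the two good events (Poisson concentration of $s_1+s_2$ below the $s\sqrt{\log(\hat{r}+1)}$ budget, and $\eps$-accuracy of the estimator from \cref{thm:minimax}), deduce $L_1(p^*,q) \leq \hat{L}_1 + \eps < \tau + \eps = 2(\hat{n}/n - \beta)$ on that intersection, and union-bound the two failure probabilities to $\delta' + \delta_{poi} = \delta$. If anything, your treatment of the Poisson tail is slightly more careful than what appears in the paper's proof of this lemma (which loosely writes the concentration event as $s_1+s_2 \leq s$ and defers the actual calculation against the $s\sqrt{\log(\hat{r}+1)}$ threshold to the proof of \cref{thm:main-result}), but the substance is identical.
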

\begin{proof}
The algorithm of \cref{thm:minimax} guarantees tells us that $|\hat{L}_1 - L_1(p^*,q)| \leq \eps$ with probability at least $1 - \delta'$.
Therefore, when \textsc{MinimaxTest} passes, we are guaranteed that $L_1(p^*,q) \leq \hat{L}_1 + \eps < \tau + \eps = 2 \left( \hat{n}/n - \beta \right)$.

Meanwhile, in the analysis of \cref{thm:minimax}, one actually needs to use $s_1 + s_2$ IID samples from $p^*$, where $s_1, s_2 \sim \mathrm{Poisson}(s_{\hat{r},\eps,\delta'})$, which can be simulated from the arrival set $A$; see \textsc{SimulateP} (\cref{alg:simulatep-subroutine}) in the appendix.
By \cref{lem:poisson-bound}, we may assume that $s_1 + s_2 \leq s$ with probability at least $1-\delta_{poi}(s)$.
Taking a union bound over the failure probability of the ``Poissonization'' event and the estimator, we see that the overall success probability is at least $1-(\delta'+\delta_{poi})=1 - \delta$.
\end{proof}

\subsection{Tying up our analysis of \textsc{TestAndMatch}}
\label{sec:advice-property-testing}

If we run \textsc{Baseline} from the beginning due to $\frac{\hat{n}}{n} \leq \beta$, then we trivially recover a $\beta$-competitive ratio.
The following lemma gives a lower bound on the obtained matching size if we performed \textsc{MinimaxTest} but decided switch to \textsc{Baseline} due to the estimated $\hat{L}_1$ being too large.

\begin{lemma}
\label{lem:run-on-after-k-steps}
Suppose we run an arbitrary algorithm for the first $k \in [n]$ online arrivals and then switch to $\textsc{Baseline}$ for the remaining $n-k$ online arrivals.
If $j$ matches made in the first $k$ arrivals, where $0 \leq j \leq k$, then the overall produced matching size is at least $\beta \cdot (n - k - j) + j$.
\end{lemma}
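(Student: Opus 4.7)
The plan is to decompose the total matching into the $j$ matches produced during the first $k$ arrivals and the matches produced by running \textsc{Baseline} on the residual sub-instance, and then apply \textsc{Baseline}'s competitive guarantee to lower-bound the latter.

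Concretely, let $\cG'$ denote the bipartite subgraph induced by the $n-k$ yet-to-arrive online vertices and the $n-j$ still-unmatched offline vertices (edges inherited from $\cG^*$), and let $\tilde n$ denote the size of a maximum matching in $\cG'$. Since \textsc{Baseline} is by hypothesis $\beta$-competitive, when we run it on the arrival sequence of the remaining $n-k$ vertices it produces at least $\beta \tilde n$ matches, so the overall matching size satisfies $m \geq j + \beta \tilde n$.

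The remaining quantitative step is to show $\tilde n \geq n - k - j$, which I would establish via a standard vertex-deletion argument on matchings. Fix any maximum matching $M^*$ of the original graph $\cG^*$, which in the regime of interest we can take to be of size $n$ (either because the instance admits a near-perfect matching, or after padding with dummy vertices/edges as is standard for matching problems). Since every vertex is incident to at most one edge of $M^*$, removing the $k$ online vertices that have already arrived and the $j$ offline vertices that the first phase has already consumed destroys at most $k + j$ edges of $M^*$ in total. The at least $n - k - j$ surviving edges all live inside $\cG'$ and form a valid matching there, giving $\tilde n \geq n - k - j$. Substituting into the previous display yields $m \geq j + \beta(n - k - j)$, as claimed.

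I do not anticipate any hard obstacles: the only nontrivial ingredient is the vertex-deletion bound on $\tilde n$, which is tight in the worst case due to the one-edge-per-vertex property of matchings, and the rest is a direct application of \textsc{Baseline}'s competitive ratio to the residual sub-instance. The delicacy worth flagging is the implicit use of $|M^*| \geq n$; in more general regimes the same argument with $|M^*| = n^*$ in place of $n$ still goes through and yields the slightly weaker bound $m \geq j + \beta(n^* - k - j)$, but the form stated captures the worst case relevant to the downstream competitive-ratio bound $m/n \geq \beta(1 - k/n)$ used in Theorem~\ref{thm:main-result}.
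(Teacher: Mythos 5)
Your proof is correct and takes essentially the same route as the paper's: decompose the final matching into the $j$ first-phase matches plus \textsc{Baseline}'s output on the residual instance, lower-bound the residual optimum by a vertex-deletion argument (removing $k$ arrived online and $j$ consumed offline vertices kills at most $k+j$ edges of a maximum matching), and then apply $\beta$-competitiveness. You have also correctly flagged the implicit use of ``maximum attainable matching size was originally $n$'' (i.e.\ $n^*=n$), which the paper likewise assumes without comment in this lemma; replacing $n$ by $n^*$ gives the general version, and the stated form is what is needed for the downstream bound $m/n \geq \beta(1-k/n)$.
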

\begin{proof}
Any match made in the first $k$ arrivals decreases the maximum attainable matching size by at most two, \emph{excluding the match made}.
As the maximum attainable matching size was originally $n$, the maximum attainable matching size on the postfix sequence after the $k$ is at least $n - k - j$.
Since $\textsc{Baseline}$ has competitive ratio $\beta$, running $\textsc{Baseline}$ on the remaining $n-k$ steps will produce a matching of size at least $\beta \cdot (n - k - j)$.
Thus, the overall produced matching size is at least $\beta \cdot (n - k - j) + j$.
\end{proof}

The proof of \cref{thm:main-result} requires the following lemma.

\begin{lemma}
\label{lm:main-result}
For any advice $\hat{c}$ with $|\hat{T}| = \hat{r}$, $\eps > 0$ and $\delta > \frac{1}{poly(\hat{r})}$, let $\hat{L}_1$ be the estimate of $L_1(p^*,q)$ in \textsc{MinimaxTest}.
If \textsc{MinimaxTest} succeeds, then \textsc{TestAndMatch} produces a matching of size $m$ with competitive ratio $\frac{m}{n^*}$ at least $\frac{\hat{n}}{n} - \frac{L_1(p,q)}{2} \geq \beta$ when $\hat{L}_1 \leq 2 \left( \frac{\hat{n}}{n} - \beta \right) - \eps$, and at least $\beta \cdot (1 - \frac{s_{\hat{r},\eps,\delta} \cdot \sqrt{\log (\hat{r}+1)}}{n})$ otherwise.
\end{lemma}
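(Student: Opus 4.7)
The plan is to split into the two cases determined by whether \textsc{MinimaxTest} passes or fails, and in each case assemble guarantees that have already been proved in the excerpt.

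\textbf{Case 1: \textsc{MinimaxTest} passes.} Here \textsc{TestAndMatch} runs \textsc{Mimic} on all arrivals. First I would invoke \cref{lem:minimax_error} to convert the passing condition on the estimate $\hat{L}_1$ into the statistical bound $L_1(p^*, q) < 2(\hat{n}/n - \beta)$. Next I would recall the \textsc{Mimic} accounting from \cref{sec:effect-of-advice-quality}: blindly mimicking the advice matching $\hat M$ of size $\hat n$ yields a matching of size at least $\hat n - L_1(c^*, \hat c)/2$. Since $p^* = c^*/n$ and $q = \hat c/n$, we have $L_1(c^*, \hat c) = n\cdot L_1(p^*, q)$, so dividing the matching size by $n^* \leq n$ gives competitive ratio at least $\hat n/n - L_1(p^*, q)/2$. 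Plugging in the bound from \cref{lem:minimax_error} yields $\hat n/n - L_1(p^*, q)/2 > \hat n/n - (\hat n/n - \beta) = \beta$, completing this case.

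\textbf{Case 2: \textsc{MinimaxTest} fails.} Now \textsc{TestAndMatch} runs \textsc{Mimic} for the first $k = s_{\hat r, \eps, \delta}\sqrt{\log(\hat r + 1)}$ arrivals (recording them into $A$ to run the test) and then \textsc{Baseline} on the remaining $n-k$ arrivals. I would apply \cref{lem:run-on-after-k-steps} with this $k$ and with $j$ the (unknown) number of matches made in the first phase, which gives overall matching size at least $\beta(n - k - j) + j = \beta n - \beta k + (1-\beta)j$. Since $\beta < 1$ and $j \geq 0$, this is at least $\beta(n - k)$; dividing by $n^* \leq n$ yields competitive ratio at least $\beta(1 - k/n)$, as claimed.

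\textbf{Main obstacle.} There isn't a substantial mathematical obstacle left, since both heavy lifting lemmas (\cref{lem:minimax_error} and \cref{lem:run-on-after-k-steps}) have already been proved. The one subtlety I want to be careful about is the scaling between $L_1(p^*, q)$ and $L_1(c^*, \hat c)$, which differ by a factor of $n$, and being honest that the competitive ratio is measured against $n^*$ rather than $n$ (we use $n^* \leq n$ to weaken the bound into the clean form claimed). In Case 2 the only mild subtlety is observing that the $(1-\beta)j$ term is non-negative and can simply be dropped to obtain a bound independent of the unknown $j$.
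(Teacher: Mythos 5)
Your proposal is correct and follows essentially the same route as the paper's own proof: the same two-case split on the test outcome, the same invocation of the \textsc{Mimic} accounting to get $m \geq \hat n - L_1(c^*,\hat c)/2$ together with the $|\hat L_1 - L_1(p^*,q)| \leq \eps$ guarantee in Case~1, and the same use of \cref{lem:run-on-after-k-steps} plus dropping the nonnegative $(1-\beta)j$ term in Case~2. The paper inlines the $L_1$ bound derivation rather than citing \cref{lem:minimax_error} explicitly, but the computation is identical.
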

\begin{proof}
We consider each case separately.

\textbf{Case 1}: $\hat{L}_1 \leq 2 \left( \frac{\hat{n}}{n} - \beta \right) - \eps$\\
\textsc{TestAndMatch} executed \textsc{Mimic} for all online arrivals, yielding a matching of size $m \geq \hat{n} - \frac{L_1(c^*, \hat{c})}{2}$.
Since \textsc{MinimaxTest} succeeds, $|\hat{L}_1 - L_1(p,q)| \leq \eps$, so
$
L_1(p,q)
\leq \hat{L}_1 + \eps
\leq 2 \left( \frac{\hat{n}}{n} - \beta \right) - \eps + \eps
= 2 \left( \frac{\hat{n}}{n} - \beta \right)
$.
Therefore,
\[
\frac{m}{n^*}
\geq \frac{m}{n}
\geq \frac{\hat{n}}{n} - \frac{L_1(c^*, \hat{c})}{2 n}
= \frac{\hat{n}}{n} - \frac{L_1(p^*, q)}{2}
\geq \beta
\]

\textbf{Case 2}: $\hat{L}_1 > 2 \left( \frac{\hat{n}}{n} - \beta \right) - \eps$\\
\textsc{TestAndMatch} executes \textsc{Baseline} after an initial batch of $k = s_{\hat{r},\eps,\delta} \cdot \sqrt{\log (\hat{r}+1)}$ arrivals that follow \textsc{Mimic}.
Suppose we made $j$ matches via \textsc{Mimic} before \textsc{MinimaxTest}.
Then, \cref{lem:run-on-after-k-steps} tells us that the overall produced matching size is at least $m \geq \beta \cdot (n - k - j) + j$.
Since $\beta < 1$, we have
$
\beta \cdot (n - k - j) + j
\geq \beta \cdot (n - k)
$.
Therefore,
\[
\frac{m}{n^*}
\geq \frac{m}{n}
\geq \frac{\beta \cdot (n - s_{\hat{r},\eps,\delta} \cdot \sqrt{\log (\hat{r}+1)})}{n}
= \beta \cdot \left( 1 - \frac{s_{\hat{r},\eps,\delta} \cdot \sqrt{\log (\hat{r}+1)}}{n} \right)
\qedhere
\]
\end{proof}

\cref{thm:main-result} follows from bounding the failure probability.

\begin{proof}[Proof of \cref{thm:main-result}]
The competitive ratio guarantees follow directly from \cref{lm:main-result}, given that \textsc{MinimaxTest} succeds.
Therefore, it only remains to bound the failure probability, which equals that probability that \textsc{MinimaxTest} fails.
This can happen if either line $3$ is executed (event $E_1$) or the algorithm in line $5$ fails (event $E_2$).

The event $E_1$ occurs when the one of the Poisson random variables in line $1$ of Algorithm \ref{alg:minimaxtest} exceed the expectation by a $\sqrt{\log \hat{r}}$ factor. Since $s_1,s_2\sim \mathrm{Poisson}(s/2)$, we have that $(s_1+s_2)\sim \mathrm{Poisson}(s)$. Thus, by Lemma \ref{lem:poisson-bound} we have that:
\[
\delta_{poi}
= \Pr[\vert (s_1+s_2) - s\vert > s\sqrt{\log \hat{r}} ]\\
\leq 2e^{-\frac{s^2\log \hat{r}}{2(s+s\sqrt{\log \hat{r}})}}
= O\left(\hat{r}^{-\frac{s}{2(1+\sqrt{\log \hat{r}} )}}\right)
= O\left( \frac{1}{poly(\hat{r})}\right)
\]
for the value of $s=\cO \left( \frac{(\hat{r}+1) \cdot \log(1/\delta')}{\eps^2 \cdot \log (\hat{r}+1)} \right)$ chosen.

Combining the above with \cref{lem:minimax_error} via union bound yields
$\Pr(\text{failure})
\leq \Pr(E_1) + \Pr(E_2)
\leq \delta_{poi} + \delta'
= \delta$.  
\end{proof}

\section{Practical considerations}
\label{sec:extensions}

While our contributions are mostly theoretical, we discuss some practical considerations here.
In particular, we would like to highlight that there is no existing practical implementation of the algorithm of \cref{thm:minimax} by \citet{JHW18}.
As is the case for most state-of-the-art distribution testing algorithms, this implementation is highly non-trivial and requires the use of optimal polynomial approximations over functions, amongst other complicated constructions.\footnote{The tester proposed by \citet{JHW18} requires a significant amount of hyperparameter tuning and no off-the-shelf implementation is available \cite{JHW-email}; see \cref{sec:appendix-poc} for more comments.}
For completeness, we implemented a proof-of-concept based on the empirical $L_1$ estimation; see \cref{sec:appendix-poc}.
While it is known that the estimation error scales with the sample size in the form $\Omega(r/\eps^2)$, we observe good empirical performance when $r$ is sublinear in $n$ or when combined with some of the practical extensions that we discussed below.

\cref{sec:sigma-remapping} and \cref{sec:coarsening-advice} can be viewed as ways to extend the usefulness of a given advice.
\cref{sec:patch-advice-till-perfect-matching} provides a way to ``patch'' an advice with $\hat{n} < n$ to one with perfect matching, without hurting the provable guarantees.
\cref{sec:true-support-size-is-small} gives a pre-processing step that can be prepended to any procedure: by losing $o(1)$, one can test whether $|T^*|$ is small and if so learn $p^*$ up to $\eps$ error to fully exploit it.

\subsection{Remapping online arrival types}
\label{sec:sigma-remapping}

Consider the graph example in \cref{fig:type-example} with type counts $c^*$ and we are given some advice count $\hat{c}$ as follows:

\begin{center}
\begin{tabular}{cc|cc}
\toprule
Types $T^*$ & $c^*$ count & Types $\hat{T}$ & $\hat{c}$ count\\
\midrule
$\{u_1, u_3\}$ & 1 & $\{u_1\}$ & 1\\
$\{u_2, u_3\}$ & 1 & $\{u_3\}$ & 1\\
$\{u_1, u_2, u_4\}$ & 2 & $\{u_4\}$ & 1\\
&& $\{u_2, u_4\}$ & 1\\
\bottomrule
\end{tabular}
\end{center}

While one can verify that both the true graph $\cG^*$ and the advice graph $\hat{\cG}$ have perfect matching, $L_1(c^*,\hat{c}) = 4$ since as $T^*$ and $\hat{T}$ have disjoint types.
From the perspective of our earlier analysis, $\hat{c}$ would be deemed as a poor quality advice and one should default to \textsc{Baseline}.

However, a closer look reveals there exists a mapping $\sigma$ from $T^*$ to $\hat{T}$ such that one can credibly ``mimic'' the proposed matching of $\hat{\cG}$ as online vertices arrive.
For example, when an online vertex $v$ with neighborhood type $\{u_1, u_3\}$ arrive, one can ``ignore'' the edge $u_3 \sim v$ and treat it as if $v$ had the type $\{u_1\}$.
Similarly, $\{u_2, u_3\}$ could be treated as $\{u_3\}$, the first instance of $\{u_1, u_2, u_4\}$ could be treated as $\{u_2, u_4\}$, and the second instance of $\{u_1, u_2, u_4\}$ could be treated as $\{u_4\}$.
Running \textsc{Mimic} under such a remapping of online types would then produce a perfect matching!
We discuss how to perform such remappings in \cref{sec:appendix-remapping-online-arrival-types}.

\subsection{Coarsening of advice}
\label{sec:coarsening-advice}

While \cref{thm:main-result} has good asymptotic guarantees as $n \to \infty$, the actual number of vertices $n$ is finite in practice.
In particular, when $n$ is ``not large enough'', \textsc{TestAndMatch} will never utilize the advice and always default to \textsc{Baseline} for all problem instances where $n \ll s_{\hat{r},\eps,\delta}$.

In practice, while the given advice types may be diverse, there could be many ``overlapping subtypes'' and a natural idea is to ``coarsen'' the advice by grouping similar types together in an effort to reduce the resultant support size of the advice (and hence $s_{\hat{r},\eps,\delta}$).
\cref{fig:coarsening-example} illustrates an extreme example where we could decrease the support size from $n$ to $2$ while still maintaining a perfect matching.
In \cref{sec:appendix-advice-coarsening}, we explain two possible ways to coarsen $\hat{c}$.

\begin{figure}[htb]
\centering
\resizebox{0.5\linewidth}{!}{
    \begin{tikzpicture}[square/.style={regular polygon,regular polygon sides=4}]
\node[] at (1,0.5) {$\hat{\cG}$};

\node[draw, thick, square, minimum size=10pt, inner sep=1pt] at (0,0) (u1) {};
\node[left=1pt of u1] {$u_1$};
\node[] at (0,-1) (udots) {$\vdots$};
\node[draw, thick, square, minimum size=10pt, inner sep=1pt] at (0,-2) (uhalf) {};
\node[left=1pt of uhalf] {$u_{\frac{n}{2}}$};
\node[draw, thick, square, minimum size=10pt, inner sep=1pt] at (0,-3) (uhalfplusone) {};
\node[] at (0,-4) (udots2) {$\vdots$};
\node[left=1pt of uhalfplusone] {$u_{\frac{n}{2}+1}$};
\node[draw, thick, square, minimum size=10pt, inner sep=1pt] at (0,-5) (un) {};
\node[left=1pt of un] {$u_n$};

\node[draw, thick, circle, minimum size=10pt, inner sep=1pt] at (2,0) (v1) {};
\node[right=1pt of v1] {$v_1$};
\node[] at (2,-1) (vdots) {$\vdots$};
\node[draw, thick, circle, minimum size=10pt, inner sep=1pt] at (2,-2) (vhalf) {};
\node[right=1pt of vhalf] {$u_{\frac{n}{2}}$};
\node[draw, thick, circle, minimum size=10pt, inner sep=1pt] at (2,-3) (vhalfplusone) {};
\node[] at (2,-4) (vdots2) {$\vdots$};
\node[right=1pt of vhalfplusone] {$v_{\frac{n}{2}+1}$};
\node[draw, thick, circle, minimum size=10pt, inner sep=1pt] at (2,-5) (vn) {};
\node[right=1pt of vn] {$v_n$};

\draw[thick] (v1) -- (u1);
\draw[thick] (v1) -- (udots);
\draw[thick] (v1) -- (uhalf);
\draw[thick] (vdots) -- (u1);
\draw[thick] (vdots) -- (udots);
\draw[thick] (vdots) -- (uhalf);
\draw[thick] (vhalf) -- (u1);
\draw[thick] (vhalf) -- (udots);
\draw[thick] (vhalf) -- (uhalf);
\draw[thick] (vhalfplusone) -- (uhalfplusone);
\draw[thick] (vhalfplusone) -- (udots2);
\draw[thick] (vhalfplusone) -- (un);
\draw[thick] (vdots2) -- (uhalfplusone);
\draw[thick] (vdots2) -- (udots2);
\draw[thick] (vdots2) -- (un);
\draw[thick] (vn) -- (uhalfplusone);
\draw[thick] (vn) -- (udots2);
\draw[thick] (vn) -- (un);
\draw[thick, dashed, red] (v1) -- (uhalfplusone);
\draw[thick, dashed, red] (vdots) -- (udots2);
\draw[thick, dashed, red] (vhalf) -- (un);
\draw[thick, dashed, red] (vhalfplusone) -- (u1);
\draw[thick, dashed, red] (vdots2) -- (udots);
\draw[thick, dashed, red] (vn) -- (uhalf);

\node[draw=blue, single arrow, single arrow head extend=5pt, thick, minimum height=20pt, inner sep=1pt] at (3.5,-2.5) {};

\node[] at (6,0.5) {$\hat{\cG}'$};

\node[draw, thick, square, minimum size=10pt, inner sep=1pt] at (5,0) (cu1) {};
\node[left=1pt of cu1] {$u_1$};
\node[] at (5,-1) (cudots) {$\vdots$};
\node[draw, thick, square, minimum size=10pt, inner sep=1pt] at (5,-2) (cuhalf) {};
\node[left=1pt of cuhalf] {$u_{\frac{n}{2}}$};
\node[draw, thick, square, minimum size=10pt, inner sep=1pt] at (5,-3) (cuhalfplusone) {};
\node[] at (5,-4) (cudots2) {$\vdots$};
\node[left=1pt of cuhalfplusone] {$u_{\frac{n}{2}+1}$};
\node[draw, thick, square, minimum size=10pt, inner sep=1pt] at (5,-5) (cun) {};
\node[left=1pt of cun] {$u_n$};

\node[draw, thick, circle, minimum size=10pt, inner sep=1pt] at (7,0) (cv1) {};
\node[right=1pt of cv1] {$v_1$};
\node[] at (7,-1) (cvdots) {$\vdots$};
\node[draw, thick, circle, minimum size=10pt, inner sep=1pt] at (7,-2) (cvhalf) {};
\node[right=1pt of cvhalf] {$u_{\frac{n}{2}}$};
\node[draw, thick, circle, minimum size=10pt, inner sep=1pt] at (7,-3) (cvhalfplusone) {};
\node[] at (7,-4) (cvdots2) {$\vdots$};
\node[right=1pt of cvhalfplusone] {$v_{\frac{n}{2}+1}$};
\node[draw, thick, circle, minimum size=10pt, inner sep=1pt] at (7,-5) (cvn) {};
\node[right=1pt of cvn] {$v_n$};

\draw[thick] (cv1) -- (cu1);
\draw[thick] (cv1) -- (cudots);
\draw[thick] (cv1) -- (cuhalf);
\draw[thick] (cvdots) -- (cu1);
\draw[thick] (cvdots) -- (cudots);
\draw[thick] (cvdots) -- (cuhalf);
\draw[thick] (cvhalf) -- (cu1);
\draw[thick] (cvhalf) -- (cudots);
\draw[thick] (cvhalf) -- (cuhalf);
\draw[thick] (cvhalfplusone) -- (cuhalfplusone);
\draw[thick] (cvhalfplusone) -- (cudots2);
\draw[thick] (cvhalfplusone) -- (cun);
\draw[thick] (cvdots2) -- (cuhalfplusone);
\draw[thick] (cvdots2) -- (cudots2);
\draw[thick] (cvdots2) -- (cun);
\draw[thick] (cvn) -- (cuhalfplusone);
\draw[thick] (cvn) -- (cudots2);
\draw[thick] (cvn) -- (cun);
\end{tikzpicture}
}
\caption{Consider $\hat{\cG}$ made by taking the union of two complete bipartite graphs ($\hat{\cG}'$) and adding the red dashed edges. By connecting $v_i$ to $u_{(i + n/2) \mathrm{mod\ } n}$, $|\hat{T}| = r = n$. Meanwhile, if we coarsen $\hat{c}$ into $\hat{c}'$ by ignoring the red dashed edges, $\hat{\cG}'$ still has a maximum matching of size $\hat{n}' = n$ while $|\hat{T}'| = r' = 2$, thus requiring significantly less samples to test since $s_{\hat{r}',\eps,\delta} \ll s_{\hat{r},\eps,\delta}$. Furthermore, if $\cG^* = \hat{G}'$, then $L_1(c^*, \hat{c}) = 2n$ and we will reject the advice $\hat{c}$ if we do not coarsen it first.}
\label{fig:coarsening-example}
\end{figure}
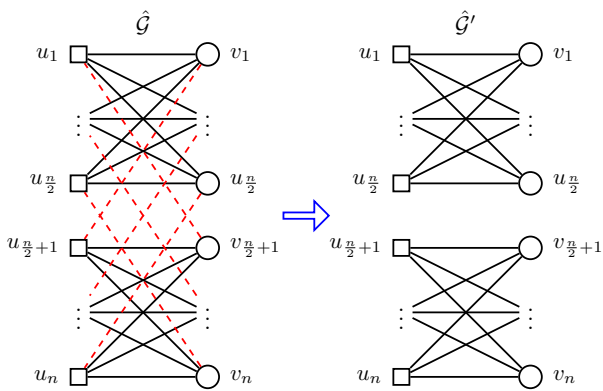

\subsection{Advice does not have perfect matching}
\label{sec:patch-advice-till-perfect-matching}

As the given advice $\hat{c}$ is arbitrary, it could be the case that any maximum matching of size $\hat{n}$ in the graph $\hat{\cG}$ implied by $\hat{c}$ is not perfect, i.e.\ $\hat{n} < n$.
A natural idea would be to ``patch'' $\hat{c}$ into some other type count $\hat{c}'$ which has a maximum matching size of $\hat{n}' = n$ in the tweaked graph $\hat{\cG}'$.
This can be done by augmenting $\hat{c}$ with additional edges between the unmatched vertices in the advice graph to obtain $\hat{c}'$.
In \cref{sec:appendix-augmenting-to-perfect-matching}, we show how to do this in a way that running \textsc{TestAndMatch} on $\hat{c}'$ does not worsen the provable guarantees as compared to directly using $\hat{c}$.

\subsection{True distribution has small support size}
\label{sec:true-support-size-is-small}

If the support size of the true types is $o(n)$, a natural thing to do is to \emph{learn} $c^*$ up to some $\eps$ accuracy while forgoing some $o(n)$ initial matches, and then obtain $\approx 1-\eps$ competitive ratio on the remaining arrivals.
Though this is wholly possible in the random arrival model, it crucially depends on $c^*$ having at most $o(n)$ types.
Although we do not know the support size of $c^*$ a priori, we can again employ techniques from property testing.
For any desired support size $k$ and constant $\eps$, \citet{valiant2017estimating,wu2019chebyshev} tell us that $O(\frac{k}{\log k})$ samples are sufficient for us to estimate the support size of a discrete distribution up to additive error of $\eps k$.
Therefore, for any $k \in o(n)$ and constant $\eps$, given any algorithm ALG under the random arrival model achieving competitive ratio $\alpha$, we can first spend $o(1)$ arrivals to test whether $c^*$ is supported on $(1 + \eps) \cdot k$ types:
\begin{itemize}
    \item If ``Yes'', then we can spend another $O(k/\eps^2) \subseteq o(1)$ arrivals to estimate $c^*$ up to $\eps$ accuracy, i.e.\ we can form $\hat{c}$ with $L_1(c^*, \hat{c}) \leq \eps$, then exploit $\hat{c}$ via \textsc{Mimic}.
    \item If ``No'', use ALG and achieve a comp.\ ratio of $\alpha - o(1)$.
\end{itemize}
The choice of $k$ is flexible in practice, depending on how much one is willing to lose in the $o(1)$ in the ``No'' case.

\section{Discussion and future directions}
\label{sec:conclusion}

We studied the online bipartite matching problem with respect to \cref{goal:test}.
We showed that it is impossible under the adversarial arrival model and designed a meta algorithm \textsc{TestAndMatch} for the random arrival model that is 1-consistent and $\beta \cdot (1 - o(1))$-robust while using histograms over arrival types as advice.
The guarantees \textsc{TestAndMatch} degrades gracefully as the quality of the advice worsens, and improves whenever the state-of-the-art $\beta$ improves. There are several interesting follow-up questions:

\begin{enumerate}
    \item \textbf{Other versions of online bipartite matching.}\\
    Whether the ideas presented in this work can be generalized to other versions of the online bipartite matching problem is indeed an interesting question.
    The hardness results of \cref{thm:hardness} directly translate as the unit weight version is a special case of the general case with vertex or edge weights, implying impossibility more general settings.
    Algorithmically, we believe that the \textsc{TestAndMatch} framework should generalize.
    However, it would require a different advice quality metric in the space of edge weights (e.g., something like earthmover distance), along with a corresponding sample efficient way to test this quality with sublinear samples so that one can still achieve a competitive ratio of $\beta \cdot (1 - o(1))$ when the test detects that the advice quality is bad.

    \item \textbf{Consistency and robustness tradeoff.}\\
    Our algorithm is a meta-algorithm that is both 1-consistent and $\beta \cdot (1 - o(1))$-robust, where the robustness guarantee improves with the state-of-the-art (with respect to $\beta$).
    As, it is impossible for any algorithm to be strictly better than 1-consistent (by definition of competitive ratio) or strictly better than $\beta$-robust (by definition of $\beta$), our algorithm (weakly-)pareto-dominates all other possible algorithm for this problem up to an $1 - o(1)$ multiplicative factor in the robustness guarantee.
    Can this $1 - o(1)$ multiplicative factor be removed?

    \item \textbf{Beyond consistency and robustness.}\\
    \textsc{TestAndMatch}'s performance guarantee is based on the $L_1$ distance over type histograms. This is very sensitive to certain types of noise, e.g., adding or removing edges at random (Erdos-Renyi). However, \cref{sec:extensions} suggests there are practical extensions that hold even when $L_1$ is large, implying it is a non-ideal metric despite satisfying consistency and robustness. Is there another criterion that could fill this gap?

    \item \textbf{Going beyond \textsc{Mimic}.}\\
    Our current approach exploits advice solely through \textsc{Mimic}, which arbitrarily chooses \emph{one} matching $\hat{M}$ to follow. Is there a more intelligent way of doing so? For example, \citet{feldman2009online} constructed \emph{two} matchings to ``load balance'' in the known IID setting.

    \item \textbf{Is there a graph where advice coarsening recovers perfect matching while \textsc{Ranking} does not have a competitive ratio close to 1?}\\
    \textsc{Ranking} is known to have a competitive ratio of $1-1/\sqrt{k}$ if there exist $k$ disjoint perfect matchings in the graph \citep{karande2011online}.
    Beyond trivial settings where there are no perfect matchings or having a pattern connecting to just 1 offline vertex (so that there is at most 1 disjoint perfect matching), we do not have an example with a formal proof that advice coarsening recovers a perfect matching while \textsc{Ranking} does not have a competitive ratio close to 1.
    We suspect that one may not be able to construct a graph with few disjoint matchings and few patterns.
    To see why, fix some graph with a perfect matching and suppose there are $q$ types $t_1, t_2, \ldots, t_q$ of sizes $s_1, \ldots, s_q$.
    By ``circularly permuting the matches'', we see that there will be $k = \min(s_1, \ldots, s_q)$ disjoint perfect matchings.
    However, this does not totally invalidate \textsc{TestAndMatch} in general.
    For example, consider the following augmentation to our approach: if $\hat{c}$ has $k$ disjoint perfect matchings after coarsening, one can run \textsc{Ranking} instead of \textsc{Mimic} and switch to \textsc{Baseline} if the test fails later.
    By doing so, one pays only an additive $1/\sqrt{k}$ in consistency by running \textsc{Ranking} instead of \textsc{Mimic} in the event that the advice was perfect.
    Note that \textsc{Baseline} may be different from \textsc{Ranking} as it is still unknown what is the true $\beta \in [0.696, 0.823]$.
    In particular, if $\beta > 0.727$, then \textsc{Baseline} cannot be \textsc{Ranking} \citep{karande2011online}.

    \item \textbf{Other online problems with random arrivals.}\\
    \textsc{TestAndMatch} did not exploit any specific properties of bipartite matching, and we suspect it may be generalized to a certain class of online problems.
\end{enumerate}

\section*{Acknowledgements}
This research/project is supported by the National Research Foundation, Singapore under its AI Singapore Programme (AISG Award No: AISG-PhD/2021-08-013).
AB and TG were supported by National Research Foundation Singapore under its NRF Fellowship Programme (NRF-NRFFAI1-2019-0002).
We would like to thank the reviewers for valuable feedback and discussions.
DC and CKL thank Anupam Gupta for discussions about learning-augmented algorithms.
DC would also like to thank Clément Canonne and Yanjun Han for discussions about distribution testing, and Yongho Shin about discussions about learning-augmented algorithms.

\section*{Impact statement}
While the contributions of this paper is mostly theoretical, one should be wary of possible societal impacts if online matching algorithms are implemented in practice.
For instance, our proposed methods did not explicitly account for possible fairness issues and such concerns warrant further investigation before being operationalized in real-world settings.

\bibliography{refs}
\bibliographystyle{plainnat}

\newpage
\appendix
\section{Extended variant of \texorpdfstring{\cref{thm:hardness}}{Theorem 3.1}}
\label{sec:appendix-extension-hardness}

Let us first prove the case when the algorithm $\cA$ is deterministic, but $\alpha \in [0, 1/2]$.
We will again use $\cG_1$ and $\cG_2$ of \cref{fig:hardness} (replicated below for convenience as \cref{fig:hardness-restated}) as a counterexample. 
Our argument follows that of the case where $\alpha=0$.

\begin{figure}[h]
\centering
\resizebox{0.7\linewidth}{!}{
    \begin{tikzpicture}[square/.style={regular polygon,regular polygon sides=4}]
\tikzmath{\graphgap=0.75;}
\node[] at (1,0.75*\graphgap) {$\cG_1$};

\node[draw, thick, square, minimum size=10pt, inner sep=1pt] at (0,0) (u1) {};
\node[left=1pt of u1] {$u_1$};
\node[] at (0,-1*\graphgap) (udots) {$\vdots$};
\node[draw, thick, square, minimum size=10pt, inner sep=1pt] at (0,-2*\graphgap) (uhalf) {};
\node[left=1pt of uhalf] {$u_{\frac{n}{2}}$};
\node[draw, thick, square, minimum size=10pt, inner sep=1pt] at (0,-3*\graphgap) (uhalfplusone) {};
\node[] at (0,-4*\graphgap) (udots2) {$\vdots$};
\node[left=1pt of uhalfplusone] {$u_{\frac{n}{2}+1}$};
\node[draw, thick, square, minimum size=10pt, inner sep=1pt] at (0,-5*\graphgap) (un) {};
\node[left=1pt of un] {$u_n$};

\node[draw, thick, circle, minimum size=10pt, inner sep=1pt] at (2,0) (v1) {};
\node[right=1pt of v1] {$v_1$};
\node[] at (2,-1*\graphgap) (vdots) {$\vdots$};
\node[draw, thick, circle, minimum size=10pt, inner sep=1pt] at (2,-2*\graphgap) (vhalf) {};
\node[right=1pt of vhalf] {$u_{\frac{n}{2}}$};
\node[draw, thick, circle, minimum size=10pt, inner sep=1pt] at (2,-3*\graphgap) (vhalfplusone) {};
\node[] at (2,-4*\graphgap) (vdots2) {$\vdots$};
\node[right=1pt of vhalfplusone] {$v_{\frac{n}{2}+1}$};
\node[draw, thick, circle, minimum size=10pt, inner sep=1pt] at (2,-5*\graphgap) (vn) {};
\node[right=1pt of vn] {$v_n$};

\draw[thick] (v1) -- (u1);
\draw[thick] (v1) -- (uhalfplusone);
\draw[thick] (vdots) -- (udots);
\draw[thick] (vdots) -- (udots2);
\draw[thick] (vhalf) -- (uhalf);
\draw[thick] (vhalf) -- (un);
\draw[thick, red] (vhalfplusone) -- (u1);
\draw[thick, red] (vdots2) -- (udots);
\draw[thick, red] (vn) -- (uhalf);

\draw[thick, dashed] (-0.5,-2.5*\graphgap) -- (7.5,-2.5*\graphgap);

\node[] at (6,0.75*\graphgap) {$\cG_2$};

\node[draw, thick, square, minimum size=10pt, inner sep=1pt] at (5,0) (cu1) {};
\node[left=1pt of cu1] {$u_1$};
\node[] at (5,-1*\graphgap) (cudots) {$\vdots$};
\node[draw, thick, square, minimum size=10pt, inner sep=1pt] at (5,-2*\graphgap) (cuhalf) {};
\node[left=1pt of cuhalf] {$u_{\frac{n}{2}}$};
\node[draw, thick, square, minimum size=10pt, inner sep=1pt] at (5,-3*\graphgap) (cuhalfplusone) {};
\node[] at (5,-4*\graphgap) (cudots2) {$\vdots$};
\node[left=1pt of cuhalfplusone] {$u_{\frac{n}{2}+1}$};
\node[draw, thick, square, minimum size=10pt, inner sep=1pt] at (5,-5*\graphgap) (cun) {};
\node[left=1pt of cun] {$u_n$};

\node[draw, thick, circle, minimum size=10pt, inner sep=1pt] at (7,0) (cv1) {};
\node[right=1pt of cv1] {$v_1$};
\node[] at (7,-1*\graphgap) (cvdots) {$\vdots$};
\node[draw, thick, circle, minimum size=10pt, inner sep=1pt] at (7,-2*\graphgap) (cvhalf) {};
\node[right=1pt of cvhalf] {$u_{\frac{n}{2}}$};
\node[draw, thick, circle, minimum size=10pt, inner sep=1pt] at (7,-3*\graphgap) (cvhalfplusone) {};
\node[] at (7,-4*\graphgap) (cvdots2) {$\vdots$};
\node[right=1pt of cvhalfplusone] {$v_{\frac{n}{2}+1}$};
\node[draw, thick, circle, minimum size=10pt, inner sep=1pt] at (7,-5*\graphgap) (cvn) {};
\node[right=1pt of cvn] {$v_n$};

\draw[thick] (cv1) -- (cu1);
\draw[thick] (cv1) -- (cuhalfplusone);
\draw[thick] (cvdots) -- (cudots);
\draw[thick] (cvdots) -- (cudots2);
\draw[thick] (cvhalf) -- (cuhalf);
\draw[thick] (cvhalf) -- (cun);
\draw[thick, red] (cvhalfplusone) -- (cuhalfplusone);
\draw[thick, red] (cvdots2) -- (cudots2);
\draw[thick, red] (cvn) -- (cun);
\end{tikzpicture}
}
\caption{(Restated) Illustration of $\cG_1$ and $\cG_2$ for \cref{thm:hardness}}
\label{fig:hardness-restated}
\end{figure}

\textbf{Special case: $\cA$ is deterministic.}
As before, we observe that any algorithm cannot distinguish between the $\cG_1$ and $\cG_2$ after the first $n/2$ arrivals.
Suppose $\cA$ is $(1-\alpha)$-consistent.
Without loss of generality, by symmetry of the argument, suppose $\cG^* = \cG^2$ and $\cA$ is given advice bit $\hat{i} = 2$.

Since $\cA$ is $(1-\alpha)$-consistent, it has to make at least $\frac{n}{2} - (1 - \alpha) \cdot n$ matches in the first $\frac{n}{2}$ arrivals\footnote{Otherwise, even if the remaining $\frac{n}{2}$ vertices are matched, $\cA$ cannot achieve $(1-\alpha) \cdot n$ total matches, violating $(1 - \alpha)$-consistency}, leaving at most $\alpha \cdot n$ unmatched offline vertices amongst $\{ u_1, \ldots u_{\frac{n}{2}} \}$.
Meanwhile, if $\cG^* = \cG^1$ instead, there can only be at most ${\color{blue}\alpha \cdot n}$ matches amongst the remaining $\frac{n}{2}$ arrivals $\{v_{\frac{n}{2}+1}, \dots, v_n\}$, resulting in a total matching size of at most $\frac{n}{2} + {\color{blue}\alpha \cdot n} = \left( \frac{1}{2} + \alpha \right) \cdot n$.
That is, any deterministic $\cA$ that is $(1-\alpha)$-consistent cannot be strictly more than $\left( \frac{1}{2} + \alpha \right)$-robust.

\textbf{General case where $\cA$ could be randomized.}
Unfortunately, randomization does not appear to help much, as we can repeat all of the above arguments in expectation.
That is, if $\hat{i}=2$, it follows from consistency that in expectation, at least $(1-\alpha)\cdot n$ of all vertices must be eventually matched, meaning that in expectation there must be $\frac{n}{2}-\alpha\cdot n$ matches in the first half.
Now, if $\cG_1$ was the true graph, then in expectation we only have $\alpha\cdot n$ possible matches to make in the second half, thus we have a maximum of $\left( \frac{1}{2} + \alpha \right) \cdot n$ matches in expectation when $\hat{i}$ is wrong.
\section{Examples of realized type counts as advice}
\label{sec:appendix-examples-realized-counts-advice}

\paragraph{Example 1: Online Ads.} 
The canonical example of online bipartite matching is that of online ads \citep{mehta2013online}. Recall that the online vertices are advertisement slots (also called impressions) and the offline vertices are advertisers. We can see that the distribution over types can be possibly forecasted by machine learning models (and in fact, indirectly used \citep{alomrani2021deep} for bipartite matching) and used as advice. This directly gives us $q$, possibly bypassing $\hat{c}$. Regardless, the more accurate the forecasting, the lower $L_1(p^*, q)$ will be.

\paragraph{Example 2: Food allocation.}
Consider a conference organizer catering lunch.
As a cost-cutting measure, they cater \emph{exactly} one food item per attendee, based on their self-reported initial dietry preferences reported during registration (each attendee may report more than one item). 
During the conference, attendees will queue up in random order, \emph{sequentially} reporting their preferences once again and being assigned their food.
Organizers have the flexibility to assign food items based on this new reporting of preferences (or, in a somewhat morally questionable fashion refuse to serve the attendee---though in the unweighted setting, reasonable algorithms should not have to do this!).
Alas, a fraction of attendees claim a different preference from their initial preference, e.g., because they were fickle, or did not take initial dietry preference questionnaire seriously. 
Given that food is already catered, how should the conference organizers sequentially distribute meals to minimize hungry attendees?

The attendees are represented by $n$ online vertices, while each of the $n$ offline vertices represent one of $k$ types of food item\footnote{For practical settings, the types of food items is generally much smaller thatn $n$.}.
The attendees' initial preference gives our advice $q$ (the distribution over types of food prefernces), which also describes a perfect matching. This preference may differ from the distribution over true preferences reported on the day of the conference $p$. However, one can reasonable assume that only a small fraction of attendees exhibit such a mismatch, meaning that the $L_1(p^*,q)$ is fairly small and advice should be accepted most of the time.

\paragraph{Example 3: Centralized labor Allocation.}
Suppose there are $n$ employees and $m$ jobs. There are $\eta$ different qualifications. This is represented by a binary matrix $\{ 0, 1 \}^{n, \eta}$, where $X_{i, k} = 1$, if employee $i$ posseses qualification $k$. Therefore, the $i$-th row of $X$, $X_i$ is a length $\eta$ boolean string containing all of $i$'s skills.

For employee $i$ to perform a job, $X_i$ needs to satisfy a boolean formula (say, given in conjunctive form). This is quite reasonable, e.g., to be an AI researcher, it needs to have knowledge of some programming language (Python, Matlab, etc.), some statistics (classical or modern), and some optimization (whether discrete or continuous). In the bipartite graph, employee $i$ has an edge to job $j$ if and only if $X_i$ satisfies this formula.

In this case, the qualifications of each employee are known by the company, who has access to their employees. Given the qualifications, the set of jobs that may be performed can be computed offline and used as advice. This advice may not be entirely correct: for example, employees may have picked up new skills (hence there may be more edges than we thought, but no less). Of course, there could also be some employees with phoney qualifications; this fraction is not too high.

One interesting property about this application is that advice may only be imperfect in the sense that edges could be added. This means that if we just mimicked, we are guaranteed to get at least $\hat{n}$. Also, the coarsening method is more easily applied.

\section{Practical considerations and extensions to our learning-augmented algorithm}
\label{sec:appendix-extensions}

\subsection{Advice coarsening}
\label{sec:appendix-advice-coarsening}

While \cref{thm:main-result} has good asymptotic guarantees as $n \to \infty$, the actual number of vertices $n$ is finite in practice.
In particular, when $n$ is ``not large enough'', \textsc{TestAndMatch} will never utilize the advice and always default to \textsc{Baseline} for all problem instances where $n \ll s_{\hat{r},\eps,\delta}$.

In practice, while the given advice types may be diverse, there could be many ``overlapping subtypes'' and a natural idea is to ``coarsen'' the advice by grouping similar types together in an effort to reduce the resultant support size of the advice (and hence $s_{\hat{r},\eps,\delta}$).
\cref{fig:coarsening-example} illustrates an extreme example where we could decrease the support size from $n$ to $2$ while still maintaining a perfect matching.

While one could treat this coarsening subproblem as an optimization pre-processing task. For completeness, we show later how one may potentially model the coarsening optimization as an integer linear program (ILP) but remark that it does not scale well in practice. That said, there are many natural scenarios where a coarsening is readily available to us.
For instance, in the online advertising, market studies typically classify users into ``types'' (with the number of types significantly less than $n$) where each type of user typically have a ``core set'' of suitable ads though the actual realized type of each arrival may be perturbed due to individual differences.

Another way to reduce the required samples for testing is to ``bucket'' the counts which are below a certain threshold to reduce the number of distinct types within the advice.
The newly created bucket type will then be a union of the types that are being grouped together.

\subsubsection{ILP for advice coarsening}

Here, we give an integer linear program (ILP) that takes in any number $|\hat{T}| = \hat{r}$ of desired groupings as input and produces a grouping proposed advice count $\hat{c}_{\hat{r}}$ on $\hat{r}$ labels that implies the maximum possible matching.
Recall that the a smaller number of resulting groups $\hat{r}$ directly translates to fewer samples $s_{\hat{r},\eps,\delta}$ required in \textsc{TestAndMatch}.
So, to utilize this ILP, one can solve for decreasing values of $r = |\hat{L}|, |\hat{L}|-1, \ldots, 1$ and evaluate the resulting maximum matching size $\hat{n}_r$ for each proposed advice count $\hat{c}_r$.
Then, one can either use the smallest possible $r$ which still preserves the size of maximum matching or even combine this with the idea from \cref{sec:patch-advice-till-perfect-matching} if one needs to further decrease $r$.

We propose to update the labels by taking \emph{intersections} of the patterns, i.e.\ for any resulting group $g_i$, we define its label pattern as $\cap_{v \in g_i} N(v)$.
Since taking intersections only restricts the edges which can be used in forming a maximum matching, this ensures that \textsc{Mimic} will always be able to mimic any proposed matching implied by the grouped patterns.

\textbf{Explanation of constants and variables}
\begin{itemize}
    \item Given the $n$ online input patterns, $b_{i,j}$ is a Boolean constant indicating whether online vertex $i \in [n]$ does \emph{not} have $j \in [n]$ as a neighbor in its pattern.
    \item Main decision variable: $x_{i,j}$ whether edge from online vertex $i$ to offline vertex $j$ is part of the matching.
    \item Auxiliary variable: $z_{i,\ell}$ is an indicator whether online vertex $i \in [n]$ is assigned to group $\ell \in [k]$.
    \item Product variable: $w_{i,j,\ell} = z_{i,\ell} \cdot z_{j, \ell}$ is an indicator whether \emph{both} online vertices $i$ and $j$ are in group $\ell$
\end{itemize}

\textbf{The ILP}
\begin{align*}
\max && \sum_{(i,j) \in E} x_{i,j}\\
s.t. && \sum_{\substack{j \in [n]\\ (i,j) \in E}} x_{i,j} & \leq 1 && \forall i \in [n] && (C1)\\
&& \sum_{\substack{i \in [n]\\ (i,j) \in E}} x_{i,j} & \leq 1 && \forall j \in [n] && (C2)\\
&& x_{i,j} & \leq 1 - w_{i,q,\ell} \cdot b_{q,j} && \forall (i,j) \in E, q \in [n], \ell \in [k] && (C3)\\
&& w_{i,j,\ell} & \leq z_{i,\ell} && \forall i \in [n], \ell \in [k] && (C4)\\
&& w_{i,j,\ell} & \leq z_{j,\ell} && \forall j \in [n], \ell \in [k] && (C5)\\
&& w_{i,j,\ell} & \geq z_{i,\ell} + z_{j,\ell} - 1 && \forall i,j \in [n], \ell \in [k] && (C6)\\
&& \sum_{\ell=1}^k z_{i,\ell} & = 1 && \forall i \in [n] && (C7)\\
&& x_{i,j} & \in \{0,1\} && \forall (i,j) \in E\\
&& z_{i,\ell} & \in \{0,1\} && \forall i \in [n], \ell \in [k]\\
&& w_{i,j,\ell} & \in \{0,1\} && \forall i,j \in [n], \ell \in [k]
\end{align*}

\textbf{Explanation of constraints}
\begin{itemize}
    \item (C1, C2) Standard matching constraints.
    \item (C3) Can only use edge $(i,j)$ if it is not ``disabled'' due to intersections. As long as \emph{some} other vertex in the same group as $i$ does \emph{not} have $j$, the edge $(i,j)$ will be disabled.
    \item (C4, C5, C6) Encoding $w_{i,j,\ell} = z_{i,\ell} \cdot z_{j, \ell}$.
    \item (C7) Every vertex assigned exactly one group.
\end{itemize}

\subsection{Remapping online arrival types}
\label{sec:appendix-remapping-online-arrival-types}

Recall the example described in \cref{sec:sigma-remapping} and how remapping helps obtain a perfect matching.
In fact, remappings can only increase the number of matches as these vertices would have been left unmatched otherwise under \textsc{Mimic}.
Note that the proposed remappings always maps an online type to a \emph{subset} so that any subsequent proposed matching can be credibly performed.

In an offline setting, given $c^*$ and $\hat{c}$, one can efficiently compute a mapping $\sigma$ that maximizes overlap using a max-flow formulation (see \cref{sec:appendix-max-flow}) and then redefine the quality of $\hat{c}$ in terms of $L_1(\sigma(c^*), \hat{c})$.
As this is impossible in an online setting, we propose a following simple mapping heuristic: when type $L$ arrives, map it to the largest subset of $A \subseteq L$ with the highest remaining possible match count.
Note that it may be the case that all subset types of $L$ no longer have a matching available to mimic from $\hat{M}$.
In the example above, we first mapped $\{u_1, u_2, u_4\}$ to $\{u_2, u_4\}$ and then to $\{u_4\}$ as $\hat{c}$ only had one count for $\{u_2, u_4\}$.

\subsubsection{Computing the optimal remapping \texorpdfstring{$\sigma$}{sigma} via a maximum flow formulation}
\label{sec:appendix-max-flow}

Consider the offline setting where we are given the true counts $c^*$ and the advice counts $\hat{c}$.

Suppose $c^*$ has $r$ non-zero counts, represented by:
$\langle L^*_1, c^*_1 \rangle, \langle L^*_2, c^*_2 \rangle, \ldots \langle L^*_r, c^*_r \rangle$, where $\sum_{i=1}^r c^*_i = n$.

Suppose $\hat{c}$ has $s$ non-zero counts, represented by:
$\langle \hat{L}_1, \hat{c}_1 \rangle, \langle \hat{L}_2, \hat{c}_2 \rangle, \ldots \langle \hat{L}_s, \hat{c}_s \rangle$, where $\sum_{i=1}^s \hat{c}_i = n$.

To compute a remapping from $c^*$ to $\hat{c}$ to maximize the number of resulting overlaps, consider the following max flow formulation on a directed graph $G = (V,E)$ with $|V| = r + s + 2$ nodes:
\begin{itemize}
    \item Create a node for each of $L^*_1, \ldots, L^*_r, \hat{L}_1, \ldots, \hat{L}_s$.
    \item Create a ``source''  and a ``destination'' node.
    \item Add an edge with a capacity $c^*_i$ from the ``source'' node to each of the $L^*_i$ nodes, for $i \in \{1, \ldots, r\}$
    \item $(\ast)$ Add an edge from $L^*_i$ to $\hat{L}_j$ with capacity $c^*_i$ if $\hat{L}_j \subseteq L^*_i$, for $i \in \{1, \ldots, r\}$ and $j \in \{1, \ldots, s\}$.
    \item Add an edge with a capacity $\hat{c}_j$ from each of the $\hat{L}_j$ nodes to the ``destination'' node, for $j \in \{1, \ldots, s\}$.
    \item Compute the maximum flow from ``source'' to ``destination''.
\end{itemize}

Since the graph has integral edge weights, the maximum flow is integral and the flow across each edge is integral.
The resultant maximum flow is the maximum attainable overlap between a remapped $c^*$ and $\hat{c}$, and we can obtain the remapping $\sigma$ by reading off the flows between on the edges from $(\ast)$.

\subsection{Augmenting advice to perfect matching}
\label{sec:appendix-augmenting-to-perfect-matching}

The following lemma tells us that there is an explicit way of augmenting $\hat{c}$ to form a new advice $\hat{c}'$ such that using $\hat{c}'$ in \textsc{TestAndMatch} does not hurt the provable theoretical guarantees as compared to directly using $\hat{c}$.

\begin{restatable}{lemma}{augmentingdoesnothurt}
\label{lem:augmenting-does-not-hurt}
Let $\hat{c}$ be an arbitrary type count with labels $\hat{T}$ implying a graph $\hat{\cG}$ with maximum matching size $\hat{n}$.
There is an explicit way to augment $\hat{c}$ to obtain $\hat{c}'$ with labels $\hat{T}'$ such that the implied graph $\hat{\cG}'$ has maximum matching size $\hat{n}' = n$.
Furthermore, running \textsc{TestAndMatch} with a slight modification of \textsc{Mimic} on $(\hat{c}', \hat{T}')$ produces a matching of size $m$ where
\[
\frac{m}{n^*}
\geq \frac{m}{n}
\geq \begin{cases}
\frac{\hat{n}}{n} - \frac{L_1(p^*, q)}{2} & \text{when $\hat{L}_1 \leq 2 \left( 1 - \beta \right) - \eps$}\\
\beta - o(1) & \text{otherwise}
\end{cases}
\]
\end{restatable}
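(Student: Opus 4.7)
The plan is to explicitly patch $\hat{c}$ into $\hat{c}'$ using a bijection between unmatched vertices, and then argue that running \textsc{TestAndMatch} on the patched advice dominates the guarantees one would get from the original $\hat{c}$. Concretely, let $\hat{M}$ be a maximum matching of $\hat{\cG}$ of size $\hat{n}$, and let $U_0 \subseteq U$ and $V_0 \subseteq V$ be the unmatched offline and online vertices respectively (each of cardinality $n - \hat{n}$). Pick any bijection $\phi: V_0 \to U_0$, and for each $v \in V_0$ with type $t_v$ in $\hat{\cG}$, change its type to $t_v \cup \{\phi(v)\}$. This yields new counts $\hat{c}'$ (still summing to $n$, with support $\hat{T}'$ of size at most $\hat{r} + (n - \hat{n})$) and an induced perfect matching $\hat{M}' = \hat{M} \cup \{(v, \phi(v)) : v \in V_0\}$ of $\hat{\cG}'$, so $\hat{n}' = n$. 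The ``slight modification'' to \textsc{Mimic} is simply that it is invoked on $(\hat{c}', \hat{M}')$ in place of $(\hat{c}, \hat{M})$, with the test threshold updated accordingly to $\tau = 2(1 - \beta) - \eps$.

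The crux is an unconditional lower bound on the number of matches obtained by this modified mimic. By the same type-counting argument as in \cref{sec:advice-property-testing} (using that $\hat{M}'$ is a perfect matching on $\hat{\cG}'$, so the effective mimic count equals $\hat{c}'$), the modified \textsc{Mimic} produces at least $n - L_1(c^*, \hat{c}')/2$ matches. Applying the triangle inequality $L_1(c^*, \hat{c}') \leq L_1(c^*, \hat{c}) + L_1(\hat{c}, \hat{c}')$ together with the observation that each of the $n - \hat{n}$ patched vertices shifts exactly one unit of mass from $t_v$ to $t_v \cup \{\phi(v)\}$ (contributing at most $2$ to the $L_1$ distance), we get $L_1(\hat{c}, \hat{c}') \leq 2(n - \hat{n})$. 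Chaining these inequalities yields the match count $\geq n - L_1(c^*, \hat{c})/2 - (n - \hat{n}) = \hat{n} - L_1(c^*, \hat{c})/2$, which is exactly the competitive ratio bound $\hat{n}/n - L_1(p^*, q)/2$.

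The rest mirrors the proof of \cref{thm:main-result}. When \textsc{MinimaxTest} passes, \cref{thm:minimax} certifies $L_1(p^*, q') \leq \hat{L}_1 + \eps \leq 2(1 - \beta)$, so we additionally have $m \geq n\beta$; combined with the bound above, this gives the first case of the lemma. When the test fails, we have made matches for only $k \in o(n)$ testing-phase arrivals before reverting to \textsc{Baseline}; applying \cref{lem:run-on-after-k-steps} with the $\beta$-competitive baseline gives $m \geq \beta(n - k) = n(\beta - o(1))$, yielding the second case. The failure probability of the test is bounded exactly as in the original analysis of \textsc{MinimaxTest}.

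The main obstacle is in justifying the unconditional bound $m \geq \hat{n} - L_1(c^*, \hat{c})/2$ under patching: a priori, moving mass toward new types in $\hat{c}'$ could cause the type-counting bound $\min(c^*(t), \hat{c}'(t))$ to undercount matches that the original $\hat{c}$-based mimic would have made (because arrivals of an original type $t_v$ may find $\hat{c}'(t_v) < \hat{c}(t_v)$). The triangle-inequality approach above cleanly resolves this by bounding the total $L_1$ slack introduced by patching; care is needed to verify that this slack is tight at $2(n - \hat{n})$ regardless of collisions among patched types (e.g., multiple $v \in V_0$ sharing the same $t_v$ but receiving different $\phi(v)$). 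Once this accounting is pinned down, the remainder of the proof is essentially syntactic substitution into the existing \cref{thm:main-result} template.
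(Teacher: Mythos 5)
Your proposal is correct, but it uses a genuinely different patching construction than the paper, and it is worth spelling out what the two choices buy. You patch by picking a bijection $\phi : V_0 \to U_0$ between unmatched online and offline vertices and adding a \emph{single} edge $(v, \phi(v))$ per $v \in V_0$, so each patched online vertex acquires the new type $t_v \cup \{\phi(v)\}$. This gives a new support $\hat{T}'$ of size up to $\hat{r} + (n - \hat{n})$. The paper instead replaces the type of every $v \in A_V$ by one single placeholder type ``New'' (think of it as $A_U$, i.e.\ all unmatched offline vertices become neighbors of all unmatched online vertices), so $|\hat{T}'| = \hat{r} + 1$ exactly. Both yield a perfect matching $\hat{M}'$ and both satisfy $L_1(\hat{c}, \hat{c}') \leq 2(n - \hat{n})$, so the first-case bound $\hat{n}/n - L_1(p^*, q)/2$ comes out identically via the same triangle-inequality chain. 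The paper's construction is the tidier one when accounting for the test's sample complexity in the second case, since it tracks $\hat{r}' = \hat{r} + 1$ explicitly; yours can inflate $\hat{r}'$ to as much as $n$, which is still fine asymptotically (the test budget stays $O(n/\log n) \in o(n)$, so the $\beta - o(1)$ claim survives), but you should note that dependence rather than leave it implicit. Finally, the ``main obstacle'' you flag about collisions among patched types is not actually a gap: shared $t_v$'s or accidental coincidences with existing or other new types only \emph{decrease} $L_1(\hat{c}, \hat{c}')$, because each patched vertex contributes at most one unit of removed mass at $t_v$ and one unit of added mass at its new type, and any cancellation is in your favor. Your note that the paper's ``modified Mimic'' step (try $A_U$ for unrecognized patterns) is replaced in your approach by simply running plain Mimic on $(\hat{c}', \hat{M}')$ is also right — that modification is only a heuristic improvement and is not needed for the worst-case bound.
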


\begin{proof}
Suppose we are given an arbitrary pattern count $\hat{c}$ and corresponding labels $\hat{L}$ such that the corresponding graph $\hat{\cG}$ has maximum matching $\hat{M}$ of size $\hat{n} < n$.
Let us fix any arbitrary maximum matching $\hat{M}$.
Denote $A_U \subseteq U$ as the set of $k = n - \hat{n}$ offline vertices and $A_V \subseteq V$ as the set of $k$ online vertices that are unmatched in $\hat{M}$.
We construct a new graph $\hat{\cG}'$ by adding a complete bipartite graph of size $k$ on $A_U \cup A_V$ to $\hat{\cG}$.
By construction, the resulting graph $\hat{\cG}'$ has a maximum matching of size $\hat{n}' = n$ due to the modified adjacency patterns of the online vertices $A_V$.

We now explain how to modify the pattern counts and labels accordingly.
Define the new set of labels $\hat{L}'$ as $\hat{L}$ with a new pattern called ``New''.
Then, we subtract away the counts of $A_V$ from $\hat{c}$ and add a count of $k$ to the label ``New'' to obtain a new pattern count $\hat{c}'$.
By construction, we see that $|\hat{L}'| = |\hat{L}| + 1$ and
\[
L_1(\hat{c}, \hat{c}')
= |\hat{c}(\text{``New''}) - \hat{c}'(\text{``New''})| + \sum_{\ell \in \hat{L}} |\hat{c}(\ell) - \hat{c}'(\ell)|
= k+k
= 2k
\]
Note that $c^*(\text{``New''}) = 0$.
By triangle inequality, we also see that
\[
L_1(c^*, \hat{c}')
\leq L_1(c^*, \hat{c}) + L_1(\hat{c}, \hat{c}')
\leq L_1(c^*, \hat{c}) + 2k
\]

\textbf{Slight modification of \textsc{Mimic}.}\;
\textsc{Mimic} will now be informed of the sets $A_U$ and $A_V$ along with the proposed matching $\hat{M}$ for the online vertices $V \setminus A_V$.
Then, whenever an online vertex $v$ arrives whose pattern does not match any in $\hat{L}$, we first try to match $v$ to an unmatched neighbor in $A_U$ if possible before leaving it unmatched.
Observe that this modified procedure can only increase the number of resultant matches since we do not disrupt any possible matchings under $(\hat{c}, \hat{L})$ while only possibly increasing the matching size via the complete bipartite graph between $A_U$ and $A_V$.

To complete the analysis, we again consider whether \textsc{Mimic} was executed throughout the online arrivals or we switched to \textsc{Baseline}, as in the analysis of \cref{thm:main-result}.
Note that now $\hat{L}_1$ is an estimate of $L_1(c^*, \hat{c}')$ instead of $L_1(c^*, \hat{c})$ and the threshold is $2 \left( \frac{\hat{n}'}{n} - \beta \right) - \eps = 2 \left( 1 - \beta \right) - \eps$ instead of $2 \left( \frac{\hat{n}}{n} - \beta \right) - \eps$ since $\hat{n}' = n$.
Also, recall that $k = n - \hat{n}$.

\textbf{Case 1}: $\hat{L}_1 < 2 (1 - \beta) - \eps$\\
Then, \textsc{TestAndMatch} executed \textsc{Mimic} throughout for all online arrivals, yielding a matching of size $m \geq n - \frac{L_1(c^*, \hat{c}')}{2}$.
Therefore,
\[
\frac{m}{n^*}
\geq \frac{m}{n}
\geq 1 - \frac{L_1(c^*, \hat{c}')}{2n}
\geq 1 - \frac{L_1(c^*, \hat{c}) + 2k}{2n}
= 1 - \frac{L_1(p,q)}{2} - \frac{n - \hat{n}}{n}
= \frac{\hat{n}}{n} - \frac{L_1(p,q)}{2}
\]

\textbf{Case 2}: $\hat{L}_1 \geq 2 (1 - \beta) - \eps$\\
Repeat the exact same analysis as in \cref{thm:main-result} but with $\hat{r}$ replaced by $\hat{r}' = |\hat{T}'| = |\hat{T}| + 1 = \hat{r} + 1$ yields a matching size of at least $\beta \cdot n - s_{\hat{r}+1,\eps,\delta} \cdot \sqrt{\log (\hat{r}+1)}$, where
\[
s_{\hat{r},\eps,\delta} = O\left( \frac{(\hat{r}+1) \cdot \log 1/\delta}{\eps^2 \cdot \log (\hat{r}+1)} \right)
\]
and $s_{\hat{r}+1,\eps,\delta} \cdot \sqrt{\log (\hat{r}+1)} \in o(1)$.
\end{proof}

\section{Deferred proofs}
\label{sec:appendix-proofs}

\begin{algorithm}[h]
\caption{$\textsc{SimulateP}$}
\label{alg:simulatep-subroutine}
\begin{algorithmic}[1]
    \INPUT Array $A$ of online arrivals so far and number of desired i.i.d.\ samples $s$, where $s \leq |A|$
    \OUTPUT $T^s_{p^*}$ \COMMENT{$s$ i.i.d.\ samples from $p^*$}
    \STATE $T^s_{p^*} \gets \emptyset$ \COMMENT{Collect simulated i.i.d.\ arrivals from $p^*$}
    \STATE $i \gets 0$
    \WHILE{$|T^s_{p^*}| < s$}
        \IF{Bernoulli($i/n$) == 1}
            \STATE $x \gets$ Pick uniformly at random from the set $\{A[0], \ldots, A[i-1]\}$
        \ELSE
            \STATE $x \gets A[i]$ \COMMENT{Uniform at random from $c^*$ under the random arrival model}
            \STATE $i \gets i + 1$
        \ENDIF
        \STATE Add $x$ to $T^s_{p^*}$
    \ENDWHILE
    \STATE \textbf{return} $T^s_{p^*}$
\end{algorithmic}
\end{algorithm}

\begin{lemma}
\label{lem:sampling-with-replacement}
In the output of \textsc{SimulateP} (\cref{alg:simulatep-subroutine}), $T^s_{p^*}$ contains $s$ i.i.d.\ samples from the realized type count distribution $p^* = c^*/n$ while using at most $s$ fresh online arrivals.
\end{lemma}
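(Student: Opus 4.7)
The plan is to verify the two claims of the lemma separately: (i) the algorithm draws at most $s$ fresh online arrivals, and (ii) the outputs in $T^s_{p^*}$ are iid samples from $p^*$. Claim (i) is immediate from code inspection: the counter $i$ is incremented only inside the \texttt{else} branch (line 7), and each execution of that branch both consumes a fresh arrival $A[i]$ and appends exactly one element to $T^s_{p^*}$. Since the loop terminates as soon as $|T^s_{p^*}|$ reaches $s$, the total number of fresh arrivals consumed is at most $s$.

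For claim (ii), I would prove the slightly stronger statement that the outputs $X_1, \ldots, X_s$ are iid uniform over the $n$ vertices of $V$; since the type of a uniformly random vertex of $V$ is distributed exactly as $p^* = c^*/n$, iid samples from $p^*$ follow on passing to types. I would argue by induction on $j$. Let $B_{j-1} = \{A[0], \ldots, A[I_{j-1}-1]\}$ be the set of fresh arrivals consumed before step $j$ and write $i = I_{j-1}$. A short combinatorial observation shows that $B_{j-1}$ equals the set of distinct values in $\{X_1, \ldots, X_{j-1}\}$ and that $i = |B_{j-1}|$, so conditioning on $(X_1, \ldots, X_{j-1})$ determines both $B_{j-1}$ and $i$. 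Under the Random Order model, $A$ is a uniformly random permutation of $V$; by exchangeability $A[i]$ is uniform over $V \setminus B_{j-1}$ conditional on any ordering of the first $i$ arrivals, and hence also conditional on the past outputs. Combining this with the Bernoulli coin on line 4 gives
\[
\Pr[X_j = a \mid X_1, \ldots, X_{j-1}] \;=\; \tfrac{i}{n}\cdot \tfrac{\mathbf{1}[a \in B_{j-1}]}{i} \;+\; \tfrac{n-i}{n}\cdot \tfrac{\mathbf{1}[a \notin B_{j-1}]}{n-i} \;=\; \tfrac{1}{n}
\]
for every $a \in V$, which simultaneously establishes uniformity of $X_j$ and independence from $(X_1, \ldots, X_{j-1})$, closing the induction.

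The chief obstacle I anticipate is the careful bookkeeping of the conditioning, in particular justifying that conditioning on the full past output sequence $(X_1, \ldots, X_{j-1})$ (rather than merely on the set $B_{j-1}$) leaves $A[i]$ uniform over $V \setminus B_{j-1}$. The cleanest resolution is to note that the Bernoulli coin flips and the uniform bag draws are independent of the permutation $A$, so the only information about $A[i]$ carried by the past outputs is the identity of the set $B_{j-1}$, whereupon the uniform-over-complement conclusion is a standard consequence of the exchangeability of a uniform random permutation. An alternative that avoids conditioning altogether is to fix an arbitrary tuple $(x_1, \ldots, x_j) \in V^j$ and compute $\Pr[(X_1, \ldots, X_j) = (x_1, \ldots, x_j)]$ by summing over all Bernoulli-and-permutation realizations consistent with producing that tuple; after telescoping the state-dependent Bernoulli factors against the falling factorial from the permutation, this probability collapses to $n^{-j}$, yielding iid uniformity in one stroke.
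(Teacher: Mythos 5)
Your proof is correct and takes essentially the same approach as the paper, which also splits on the Bernoulli coin and appeals to the uniformity of the next fresh arrival under random order. The difference is one of rigor rather than method: the paper's proof is a two-sentence gesture, whereas you supply the conditional-probability computation $\Pr[X_j = a \mid X_1,\ldots,X_{j-1}] = \tfrac{i}{n}\cdot\tfrac{1}{i}\mathbf{1}[a\in B_{j-1}] + \tfrac{n-i}{n}\cdot\tfrac{1}{n-i}\mathbf{1}[a\notin B_{j-1}] = \tfrac{1}{n}$ and carefully handle the conditioning via exchangeability.
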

\begin{proof}
With probability $i/n$, we choose a uniform at random item from $\{A[0], \ldots, A[i-1]\}$.
With probability $1 - i/n$, we pick the next item $A[i]$ from the existing arrivals which was uniform at random under the random arrival model assumption.
Since we could possibly reuse arrivals, $T^s_{p^*}$ is formed by using at most $s$ fresh arrivals.
\end{proof}

\subsection{Proof of Theorem \ref{thm:minimax}}

\minimax*
\begin{proof}
    Using Theorem $2$ in \cite{JHW18}, we get that using $s = \Theta(\frac{r}{\varepsilon^2 \log r})$, their estimator has $\varepsilon$ additive error in expectation.
    Therefore, by using $100\cdot s$ samples, we can achive $\varepsilon/10$ additive error in expectation, i.e\ $\mathbb{E}[|\hat{L}_1 - L_1(p,q)| ]=\varepsilon/10$.
    By Markov's inequality, we get:
    \[
    \Pr[ |\hat{L}_1 - L_1(p,q)| > \varepsilon]\leq 1/10
    \]
    Thus, by repeating the entire algorithm $O(\log(1/\delta))$ times and choosing the median $\tilde{L}_1$ of the resulting estimates, we get:
     \[
    \Pr[|\tilde{L}_1 - L_1(p,q)| > \varepsilon]\leq \delta
    \]
\end{proof}

\section{Proof of concept}
\label{sec:appendix-poc}

It is our understanding that the tester proposed by \citet{JHW18} requires a significant amount of hyperparameter tuning and no off-the-shelf implementation is available \citep{JHW-email}. One may consider using an older method by \citet{valiant2011power} which is also sublinear in the number of samples but their proposed algorithm is for non-tolerant testing and requires a non-trivial code adaptation before it is applicable to $L_1$ estimation.

As a proof-of-concept, we implemented \textsc{TestAndMatch} with the empirical $L_1$ estimator and study the resultant competitive ratio under degrading advice quality.
The source code is available at \url{https://github.com/cxjdavin/online-bipartite-matching-with-imperfect-advice}.

\subsection{Implementation details}

From \cref{sec:prelims}, we know that the state-of-the-art advice-less algorithm for random order arrival is the \textsc{Ranking} algorithm of \cite{karp1990optimal} which achieve a competitive ratio of $\beta = 0.696$ \citep{mahdian2011online}.

For our testing threshold, we set $\eps = \hat{n}/n - \beta$ so that $\tau = 2(\hat{n}/n - \beta) - \eps = \hat{n}/n - \beta$.
We also implemented the following practical extensions to \textsc{TestAndMatch} which we discussed in \cref{sec:extensions}:
\begin{enumerate}
    \item Sigma remapping (\cref{sec:sigma-remapping})
    \item Bucketing so that $\hat{r}/\eps^2 < n$ (\cref{sec:coarsening-advice})
    \item Patching so that $\hat{n}' = n$ (\cref{sec:patch-advice-till-perfect-matching})
\end{enumerate}

We tested 4 variants of \textsc{TestAndMatch}, one with all extensions enabled and three others that disables one extension at a time (for ablation testing).

\subsection{Instances}

Our problem instances are generated from the synthetic hard known IID instance of \cite{manshadi2012online} where any online algorithm achieves a competitive ratio of at most 0.823 in expectation:
\begin{itemize}
    \item Let $Y_k$ denote the set of online vertices with $k$ random offline neighbors (out of $\binom{n}{k}$)
    \item Let $m = \frac{c_{2.5}^*}{2} \cdot n$, where $c_{2.5}^* = 0.81034$ is some constant defined in \cite{manshadi2012online} (not to be confused with our type counts $c^*$)
    \item Sample $m$ random online vertices from $Y_2$, i.e.\ each online vertex is adjacent to a random subset of 2 offline vertex.
    \item Sample $m$ random online vertices from $Y_3$, i.e.\ each online vertex is adjacent to a random subset of 3 offline vertex.
    \item Sample $n - 2m$ random online vertices from $Y_n$, i.e.\ each online vertex is adjacent to every offline vertex.
    \item Permute the online vertices for a random order arrival
\end{itemize}
Here, the support size of any generated type count $c^*$ is roughly $0.8n$ due to the samples from $Y_2$ and $Y_3$.

\subsection{Corrupting advice}

Starting with perfect advice $\hat{c} = c^*$, we corrupt the advice by an $\alpha$ parameter using two types of corruption.
\begin{enumerate}
    \item Pick a random $\alpha \in [0,1]$ fraction of online vertices
    \item Generate a random type for each of them by independently connecting to each offline vertex with probability $\frac{\ln n}{10 n}$.
    \item Type 1 corruption (add extra connections): Define the new type as the union of the old vertex type and the new random type.
    \item Type 2 corruption (replace connections): Define the new type as the new random type.
\end{enumerate}
As a remark, our random type generation biases towards a relatively sparse corrupted graph.

\subsection{Preliminary results}

We generated 10 random graph instances with $n = 2000$ offline and $n$ online vertices.
\cref{fig:poc} illustrates the resulting plots with error bars.

\begin{figure}[h]
    \centering
    \includegraphics[width=0.45\textwidth]{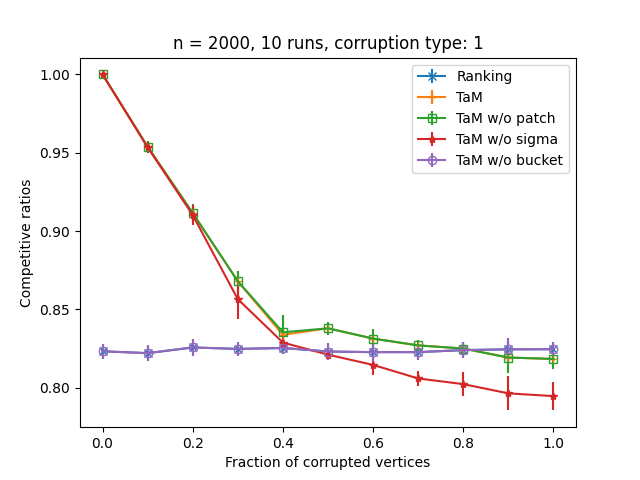}
    \includegraphics[width=0.45\textwidth]{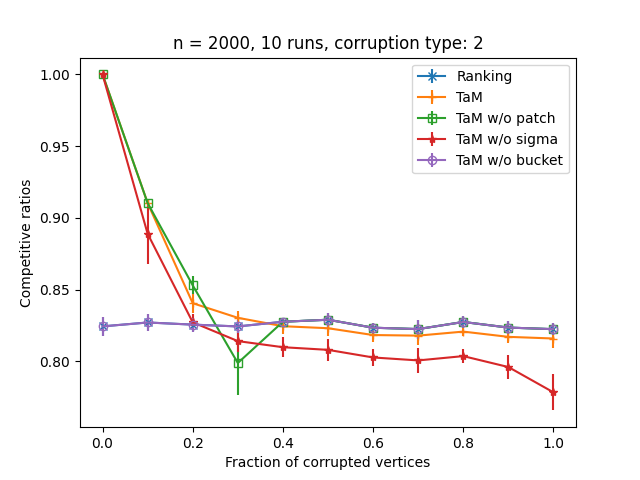}
    \caption{$n = 2000$, averaged over 10 runs. TaM refers to our implementation of \textsc{TestAndMatch}.}
    \label{fig:poc}
\end{figure}

In all cases, we see that the attained competitive ratio is highest when all extensions are enabled.
We also see that the degradation below the baseline is not very severe ($< 0.1$ for all cases, even when not all extensions are enabled).

Unsurprisingly, the competitive ratios of Ranking and ``TaM without bucket'' coincide because because $r/\eps^2 > n$ and we always default to baseline without performing any tests (to maintain robustness).

For corruption type 1, the ``sigma remapping'' extension makes our algorithm robust against additive edge corruption, and so the ``patching'' extension has no further impact.

\end{document}